\newcommand{\indep}{\rotatebox[origin=c]{90}{$\models$}}
\newcommand{\nindep}{\not\!\perp\!\!\!\perp}
\newtheorem{assumption}{A}
\newcommand{\ggm}{\text{$GG_{\mathcal{M}}$}}
\newcommand{\ggms}{\text{$GG_{\mathcal{M}_s}$}}
\newcommand{\sagg}{\text{$\sigma$-AGG}}
\newcommand{\saggm}{\text{$\sigma$-AGG\textsubscript{$\mathcal{M}$}}}
\newcommand{\saggms}{\text{$\sigma$-AGG\textsubscript{$\mathcal{M}_s$}}}
\newcommand{\aggms}{\text{AGG\textsubscript{$\mathcal{M}_s$}}}
\newcommand{\mstance}{Sentiment}
\title[Relational Causal Models with Cycles]{Relational Causal Models with Cycles: \\ Representation and Reasoning}
\begin{document}

\maketitle

\begin{abstract} 

Causal reasoning in relational domains is fundamental to studying real-world social phenomena in which individual units can influence each other's traits and behavior. Dynamics between interconnected units can be represented as an instantiation of a relational causal model; however, causal reasoning over such instantiation requires additional templating assumptions that capture feedback loops of influence. %
Previous research has developed lifted representations to address the relational nature of such dynamics but has strictly required that the representation has no cycles. 
To facilitate cycles in relational representation and learning, 
we introduce relational $\sigma$-separation, a new criterion for understanding relational systems with feedback loops. We also introduce a new lifted representation, $\sigma$-\textit{abstract ground graph} which helps with abstracting statistical independence relations in all possible instantiations of the cyclic relational model. We show the necessary and sufficient conditions for the completeness of $\sigma$-AGG and that relational $\sigma$-separation is sound and complete in the presence of one or more cycles with arbitrary length. %
To the best of our knowledge, this is the first work on representation of and reasoning with cyclic relational causal models.

\end{abstract}

\begin{keywords}%
    Representation, Cycles, Relational Causal Model
\end{keywords}

\maketitle

\section{Introduction}

Causal inference methods from observational data often assume that causal relationships can be represented in an acyclic graphical model. 
However, many real-world phenomena involve feedback loops or cycles that violate the acyclicity assumption. For example, supply and demand affect price and vice versa, hormone levels in the body affect each other and friends can impact each other's choices. Common to these scenarios is that these interactions occur over time but any individual directed relationship may not be observed. Instead what is observed is a set of values that are result of long term individual interactions. This current state can be represented through a cycle.

The existing works on cyclic causal models primarily focus independently and identically distributed (i.i.d) data instances~\citep{richardson-uai96,richardson-ijar97,strobl-ijdsa19,rantanen-ijar20}. However, in many real-world systems units are often interconnected in a complex network. Causal reasoning over such relational systems is central to understanding real-world social phenomena, such as social influence and information diffusion. %
For example, a well-known study by  \citet{christakis-nejm07} investigates whether obesity is contagious in a population where each person's eating habits can affect their friends and family and vice-versa. Similarly, several real-world problems in epidemiology and computational social science encounter such mutual influence and interaction among human subjects. For example, effect of vaccination in mother and child \citep{shpitser-nips15,vanderweele-epidemiology12}, and
justices influencing each other in supreme court decisions \citep{ogburn-stats20} are all examples of mutual influence.  %
Studying such phenomena requires causal reasoning over complex interactions between interconnected entities. However, causal questions of mutual influence in a relational system can be hard to answer due to the lack of 
appropriate causal representations and reasoning mechanisms.

One way to represent interference is through SCMs which capture pairwise relationships~\citep{ogburn-stats14,shalizi-smr11}, where one individual's influence on another is represented independently. Such representation is easy to reason about but it assumes that pairs of nodes are independent of other pairs of nodes. Unfortunately, this does not reflect the properties of real-world social networks in which nodes are arbitrarily interconnected with each other.

The development of \textit{relational causal models}, which generalize over structural causal models, is an important step towards capturing interactions between non-i.i.d instances~\citep{maier-arxiv13,maier-uai13,lee-uai15,bhattacharya-uai20}. Relational models involve multiple types of interacting entities with probabilistic dependencies among their attributes. \citet{maier-arxiv13} developed a lifted causal representation named \textit{abstract ground graph (AGG)} that abstracts over all instantiations of a relational model. AGG enables reasoning about causal queries in relational causal models and relational causal discovery. %
However, existing relational causal models assume acyclicity and do not allow for reasoning about identification in the presence of feedback loops. %

\begin{figure}
    \centering

    \scalebox{0.7} {

\tikzset{every picture/.style={line width=0.75pt}} %

\begin{tikzpicture}[x=0.75pt,y=0.75pt,yscale=-1,xscale=1]

\draw   (210,13) -- (360,13) -- (360,92) -- (210,92) -- cycle ;
\draw   (10,13) -- (119,13) -- (119,93) -- (10,93) -- cycle ;
\draw   (452,13) -- (581,13) -- (581,92) -- (452,92) -- cycle ;
\draw   (164,32) -- (199,52) -- (164,72) -- (129,52) -- cycle ;
\draw    (119,42) -- (129,52) ;
\draw    (118,52) -- (129,52) ;
\draw    (119,61) -- (129,52) ;
\draw    (199,52) -- (209,62) ;
\draw    (199,52) -- (209,43) ;
\draw    (199,52) -- (210,52) ;
\draw   (405,32) -- (440,52) -- (405,72) -- (370,52) -- cycle ;
\draw    (360,42) -- (370,52) ;
\draw    (359,52) -- (370,52) ;
\draw    (360,61) -- (370,52) ;
\draw    (440,52) -- (451,52) ;
\draw [line width=1.5]    (88,49) .. controls (126.02,20.72) and (193.52,20.02) .. (229.31,49.66) ;
\draw [shift={(232,52)}, rotate = 222.44] [fill={rgb, 255:red, 0; green, 0; blue, 0 }  ][line width=0.08]  [draw opacity=0] (11.61,-5.58) -- (0,0) -- (11.61,5.58) -- cycle    ;
\draw [line width=1.5]    (333,51) .. controls (378.08,17.68) and (432.76,17.01) .. (474.46,51.82) ;
\draw [shift={(477,54)}, rotate = 221.38] [fill={rgb, 255:red, 0; green, 0; blue, 0 }  ][line width=0.08]  [draw opacity=0] (11.61,-5.58) -- (0,0) -- (11.61,5.58) -- cycle    ;
\draw   (26,61) .. controls (26,52.72) and (43.24,46) .. (64.5,46) .. controls (85.76,46) and (103,52.72) .. (103,61) .. controls (103,69.28) and (85.76,76) .. (64.5,76) .. controls (43.24,76) and (26,69.28) .. (26,61) -- cycle ;
\draw   (227,60) .. controls (227,51.72) and (252.74,45) .. (284.5,45) .. controls (316.26,45) and (342,51.72) .. (342,60) .. controls (342,68.28) and (316.26,75) .. (284.5,75) .. controls (252.74,75) and (227,68.28) .. (227,60) -- cycle ;
\draw   (472,61) .. controls (472,53.27) and (491.92,47) .. (516.5,47) .. controls (541.08,47) and (561,53.27) .. (561,61) .. controls (561,68.73) and (541.08,75) .. (516.5,75) .. controls (491.92,75) and (472,68.73) .. (472,61) -- cycle ;
\draw [line width=1.5]  [dash pattern={on 5.63pt off 4.5pt}]  (232,67) .. controls (196.9,95.27) and (138.98,97.88) .. (92.55,74.82) ;
\draw [shift={(89,73)}, rotate = 28.01] [fill={rgb, 255:red, 0; green, 0; blue, 0 }  ][line width=0.08]  [draw opacity=0] (11.61,-5.58) -- (0,0) -- (11.61,5.58) -- cycle    ;

\draw (245,53) node [anchor=north west][inner sep=0.75pt]   [align=left] {Engagement};
\draw (218,19) node [anchor=north west][inner sep=0.75pt]   [align=left] {\textbf{POST}};
\draw (35,54) node [anchor=north west][inner sep=0.75pt]   [align=left] {Sentiment};
\draw (16,20) node [anchor=north west][inner sep=0.75pt]   [align=left] {\textbf{USER}};
\draw (485,53) node [anchor=north west][inner sep=0.75pt]   [align=left] {Preference};
\draw (459,18) node [anchor=north west][inner sep=0.75pt]   [align=left] {\textbf{MEDIA}};
\draw (143,48) node [anchor=north west][inner sep=0.75pt]   [align=left] {{\scriptsize REACTS}};
\draw (380,47) node [anchor=north west][inner sep=0.75pt]   [align=left] {{\scriptsize CREATES}};

\end{tikzpicture}

}
    
    \caption{
        Example of relational model with and without feedback loop. Rectangle, rhombus and oval shapes represent entity, relationship and attributes respectively. Arrows refer to relational dependence. The solid arrows constitute an acyclic relational model. The dashed arrow creates a feedback loop in the model.
    }
    
    \label{fig:rcm}
\end{figure}
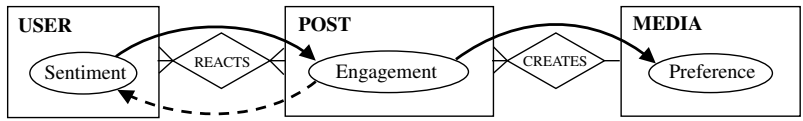

Figure \ref{fig:rcm} shows an example of a relational model where users react to news articles or posts on a specific topic (e.g., vaccines) generated by media agencies. The preference of the media regarding a topic (e.g., pro- vs anti-vaccination) is influenced by the engagement or feedback (e.g., positive/negative comments) it receives on its existing posts. The sentiment of a user towards a given post directly impacts their engagement in the post. The relational model representation tempts one to conclude that the sentiment of users regarding vaccination is independent of the preference of media agencies given engagements of the posts the users react to. However, as we show in section \ref{sec:gg_agg}, this is not necessarily true. 
Moreover, users' sentiment can also be impacted by the engagements in posts they interact with. The dashed arrow in Figure \ref{fig:rcm} represents such a dependency which makes the model a cyclic one. Unfortunately, even though the model seems simple and realistic, the abstract ground graph (AGG) representation and relational $d$-separation no longer apply due to the presence of a feedback loop (i.e., a cycle).

In this work, we specifically study cyclic RCMs and show that they offer the necessary representation to reason about a lot of real-world causal problems where popular assumptions do not hold. 
We do not assume the Stable Unit Treatment Value Assumption (SUTVA), of causal inference, according to which the outcome of each unit depends only on the 
unit itself~\citep{rubin-jstor80}. This Assumption is violated in relational systems where the treatment and outcome of one unit can impact other units' outcomes.
To the best of our knowledge, this is the first work that addresses representation of and reasoning about cyclic relational causal models. We define a new abstract representation, $\sigma$-\emph{abstract ground graph} ($\sigma$-AGG) which generalizes over cyclic relational models. In order to reason about relational queries in $\sigma$-AGG, we introduce \emph{relational $\sigma$-separation} and provide proofs for its soundness and completeness for all instantiations of a relational model. The implications of this new representation are twofold. First, it can lead the way for reasoning about causal effects like interference and contagion in relational systems, which are of wide interest in the social sciences. 
Second, the abstract representation and its identified properties will lay a foundation for causal structure learning from relational models with cycles.  We show the sufficient conditions for the completeness of $\sigma$-AGG by first resolving an open problem on the completeness of AGG. Finally, we discuss the Markov condition of relational $\sigma$-separation and its implications.

\section{Related Work}

\citet{maier-arxiv13} proposed a sound and complete abstract representation for relational data with multiple entities and relationships. They defined an abstract representation of relational dependence called abstract ground graph based on the assumption that the underlying relational model is acyclic. Moreover, they introduced relational $d$-separation, a new criterion to reason about statistical independence in all the instantiations of a relational model. They proved the soundness and completeness of relational $d$-separation. The AGG and associated relational $d$-separation criterion together allows a formal basis of causal analysis in relational systems. The same authors proposed a constraint-based causal structural learning algorithm called RCD \citep{maier-uai13} which is first of its kind.

\citet{lee-uai15} presented a critical view on the generalizability of AGG where they constructed a counterexample which shows how AGG can fail to abstract all $d$-separation relationships in the ground graphs. They also identified a shortcoming of the necessary conditions for considering a intersection variable and provided a revised set of conditions. In a different work, \citet{lee-uai20} proposed a new abstract representation of relational causal models called the unrolled graph which guarantees a weaker sense of completeness. However, neither of the two groups of work moved away from the assumption of acyclicity. As a result their developments are not directly applicable to relational models with cycles or feedback loops. Alternatively, \textit{chain graphs} have been used for representing systems with pairwise feedback loops \citep{lauritzen-jrss02, dawid-pmlr10, ogburn-stats20}. 
There is no prior work that focuses on representation and reasoning of relational models with cycles or feedback loops of arbitrary length.

Existing research on representation and reasoning about systems with cycles is based on propositional variables, not relational ones. \citet{spirtes-uai95} showed that, in general case, without any specific assumption regarding the nature of dependence (linear, polynomial etc.), the $d$-separation relations in a directed cyclic graph are not sufficient to entail all the corresponding conditional independence relations. \citet{forre-arxiv17} introduced $\sigma$-separation, an alternative formulation for which the corresponding Markov property is shown to hold in a very general setting. According to them, the $\sigma$-separation Markov property seems appropriate for a wide class of cyclic structural causal models with non-linear functional relationships between non-discrete variables.
\section{Preliminaries}

In this section we introduce the necessary terminology to discuss representation and abstraction of cyclic relational models.

\subsection{Directed Cyclic Graphs}

A Directed Cyclic Graph (DCG) is a graph $\mathcal{G} = \langle \mathcal{V}, \mathcal{E} \rangle$ with nodes $\mathcal{V}$ and edges $\mathcal{E} \subseteq \{(u, v) : u, v \in V, u \neq v\}$ where $(u,v)$ is an ordered pair of nodes. We will denote a directed edge $(u, v) \in \mathcal{E}$ as $u \rightarrow v$ or $v \leftarrow u$, and call $u$ a parent of $v$. In this work, we restrict ourselves to DCG as the causal graphical model. A walk between two nodes $u, v \! \in \! \mathcal{V}$ is a tuple $\langle v_0, e_1, v_1, e_2, v_2, . . . , e_n, v_n \rangle$ of alternating nodes and edges in $\mathcal{G} (n \geq 0)$, such that $v_0, . . . , v_n \in \mathcal{V}$, and $e_1, . . . , e_n \in \mathcal{E}$, starting with node $v_0 = u$ and ending with node $v_n = v$ where the edge $e_k$ connects the two nodes $v_{k-1}$ and $v_k \in \mathcal{G}$ for all $k = 1, . . . , n$. If the walk contains each node at most once, it is called a \textit{path}. A \textit{directed walk (path)} from $v_i \in \mathcal{V}$ to $v_j \in \mathcal{V}$ is a walk (path) between $v_i$ and $v_j$ 
such that every edge $e_k$ on the walk (path) is of the form $v_{k - 1} \rightarrow v_k$, i.e., every edge is directed and points away from $v_i$.

\sloppy
We get the \textit{ancestors} of node $v_j$ by repeatedly following the path(s) through the parents: $AN_\mathcal{G}(v_j) := \{v_i \in V : v_i = v_0 \rightarrow v_1 \rightarrow . . . \rightarrow v_n = v_j \in \mathcal{G}\}$. Similarly, we define the \textit{descendants} of $v_i: DE_\mathcal{G}(v_i) := {v_j \in \mathcal{V} : v_i = v_0 \rightarrow v_1 \rightarrow . . . \rightarrow v_n = v_j \in \mathcal{G}}$. Each node is an ancestor and descendant of itself. 
A directed cycle is a directed path from $v_i$ to $v_j$ such that in addition, $v_j \rightarrow v_i \in \mathcal{E}$. All nodes on directed cycles passing through $v_i \in \mathcal{V}$ together form the strongly connected component $SC_\mathcal{G}(v_i) := AN_\mathcal{G}(v_i) \cap DE_\mathcal{G}(v_i)$ of $v_i$.

\subsection{$\sigma$-separation}

The idea of $\sigma$-separation follows from $d$-separation, a fundamental notion in DAGs which was first introduced by~\citet{pearl-book88}:

\begin{definition} [$d$-separation]
A walk $\langle v_0 . . . v_n \rangle$ in DCG $G = \langle \mathcal{V}, \mathcal{E} \rangle$ is $d$-separated by $C \subseteq V$ if:

\begin{enumerate}[parsep=0pt]
    \item its first node $v_0 \in C$ or its last node $v_n \in C$, or
    \item it contains a collider $v_k \notin AN_\mathcal{G}(C)$, or
    \item it contains a non-collider $v_k \in C$.
\end{enumerate}

\noindent
If all paths in $\mathcal{G}$ between any node in set $A \subseteq \mathcal{V}$ and any node in set $B \subseteq \mathcal{V}$ are $d$-blocked by a set $C \subseteq \mathcal{V}$, we say that $A$ is $d$-separated from $B$ by $C$, and we write $A \overset{d}{\underset{\mathcal{G}}{\indep}} B | C$.

\end{definition}

$d$-separation exhibits the global Markov property in DAGs which states that if two variables $X$ and $Y$ are $d$-separated given another variable $Z$ in a DAG representation then $X$ and $Y$ are conditionally independent given $Z$ in the corresponding distribution of the variables. However, \citet{spirtes-uai95,neal-jair00} show that without any specific assumption regarding the nature of dependence (i.e. linear, polynomial), the $d$-separation relations are not sufficient to entail all the corresponding conditional independence relations in a DCG. In a recent work, an alternative formulation called $\sigma$-separation is introduced which holds for a very general graphical settings \citep{forre-arxiv17}. 

Here, we consider a simplified version of the formal definition of $\sigma$-separation:

\begin{definition} [$\sigma$-separation] \citep{forre-arxiv17}

A walk $\langle v_0 . . . v_n \rangle$ in DCG $G = \langle \mathcal{V}, \mathcal{E} \rangle$ is $\sigma$-blocked by $C \subseteq V$ if:

\begin{enumerate}[parsep=0pt]
    \item its first node $v_0 \in C$ or its last node $v_n \in C$, or
    \item it contains a collider $v_k \notin AN_\mathcal{G}(C)$, or
    \item it contains a non-collider $v_k \in C$ that points to a node on the walk in another strongly connected component (i.e., $v_{k-1} \rightarrow v_k \rightarrow v_{k+1}$ with $v_{k+1} \notin SC_\mathcal{G}(v_k)$, $v_{k-1} \leftarrow v_k \leftarrow v_{k+1}$ with $v_{k-1} \notin SC_\mathcal{G}(v_k)$
    or $v_{k-1} \leftarrow v_k \rightarrow v_{k+1}$ with $v_{k-1} \notin SC_\mathcal{G}(v_k)$
    or $v_{k+1} \notin SC_\mathcal{G}(v_k)$).
\end{enumerate}

\noindent
If all paths in $\mathcal{G}$ between any node in set $A \subseteq \mathcal{V}$ and any node in set $B \subseteq \mathcal{V}$ are $\sigma$-blocked by a set $C \subseteq \mathcal{V}$, we say that $A$ is $\sigma$-separated from $B$ by $C$, and we write $A \overset{\sigma}{\underset{\mathcal{G}}{\indep}} B | C$.

\end{definition}

\citet{forre-arxiv17} show that the global directed Markov property \footnote{Definition given in appendix} holds for $\sigma$-separation in directed cyclic graphs, unlike $d$-separation. This enables $\sigma$-separation to perform a similar role to $d$-separation but for directed cyclic graphs. $\sigma$-separation is a generalization of $d$-separation.

\subsection{$\sigma$-faithfulness}
$\sigma$-\textit{faithfulness} refers to the property which states that all statistical dependencies found in the distribution generated by a given causal structure model is entailed by the $\sigma$-separation relationships.

\begin{definition} [$\sigma$-faithfulness]
Given $\mathcal{X}_A$, $\mathcal{X}_B$, $\mathcal{X}_C$ as the distributions of variables $A$, $B$, $C$ respectively in solution $\mathcal{X}$ of a causal model $\mathcal{M}$, $\sigma$-\textit{faithfulness} states that if $\mathcal{X}_A$ and $\mathcal{X}_B$ are conditionally independent given $\mathcal{X}_C$, then $A$ and $B$ are $\sigma$-separated by $C$ in the corresponding possibly cyclic graphical model $\mathcal{G}$ of $\mathcal{M}$.
\end{definition}

\subsection{Relational Causal Models (RCM)}
\label{sec:rcm}

\begin{figure}
    \centering

    \subfigure{
        \label{sfig:skel}
        \scalebox{0.53} {

\tikzset{every picture/.style={line width=0.75pt}} %

\begin{tikzpicture}[x=0.75pt,y=0.75pt,yscale=-1,xscale=1]

\draw   (12,20) -- (117.8,20) -- (117.8,83) -- (12,83) -- cycle ;
\draw   (28,61) .. controls (28,52.72) and (45.24,46) .. (66.5,46) .. controls (87.76,46) and (105,52.72) .. (105,61) .. controls (105,69.28) and (87.76,76) .. (66.5,76) .. controls (45.24,76) and (28,69.28) .. (28,61) -- cycle ;
\draw   (12,135) -- (121,135) -- (121,198) -- (12,198) -- cycle ;
\draw   (28,176) .. controls (28,167.72) and (45.24,161) .. (66.5,161) .. controls (87.76,161) and (105,167.72) .. (105,176) .. controls (105,184.28) and (87.76,191) .. (66.5,191) .. controls (45.24,191) and (28,184.28) .. (28,176) -- cycle ;
\draw   (172,75) -- (322,75) -- (322,137) -- (172,137) -- cycle ;
\draw   (189,116) .. controls (189,107.72) and (214.74,101) .. (246.5,101) .. controls (278.26,101) and (304,107.72) .. (304,116) .. controls (304,124.28) and (278.26,131) .. (246.5,131) .. controls (214.74,131) and (189,124.28) .. (189,116) -- cycle ;
\draw   (372,23) -- (501.8,23) -- (501.8,80) -- (372,80) -- cycle ;
\draw   (378,59) .. controls (378,50.72) and (403.74,44) .. (435.5,44) .. controls (467.26,44) and (493,50.72) .. (493,59) .. controls (493,67.28) and (467.26,74) .. (435.5,74) .. controls (403.74,74) and (378,67.28) .. (378,59) -- cycle ;
\draw   (372,136) -- (501,136) -- (501,198) -- (372,198) -- cycle ;
\draw   (392,177) .. controls (392,169.27) and (411.92,163) .. (436.5,163) .. controls (461.08,163) and (481,169.27) .. (481,177) .. controls (481,184.73) and (461.08,191) .. (436.5,191) .. controls (411.92,191) and (392,184.73) .. (392,177) -- cycle ;
\draw [line width=2.25]    (121,51) -- (170.8,105) ;
\draw [line width=2.25]    (121,168) -- (170.8,105) ;
\draw [line width=2.25]    (434.8,136) -- (443,79) ;
\draw [line width=2.25]    (320.8,105) -- (371.8,166) ;
\draw [line width=2.25]    (121,51) -- (372,51) ;

\draw (36,55) node [anchor=north west][inner sep=0.75pt]   [align=left] {Sentiment};
\draw (18,23) node [anchor=north west][inner sep=0.75pt]   [align=left] {\textbf{ALICE}};
\draw (36,169) node [anchor=north west][inner sep=0.75pt]   [align=left] {Sentiment};
\draw (18,139) node [anchor=north west][inner sep=0.75pt]   [align=left] {\textbf{BOB}};
\draw (207,109) node [anchor=north west][inner sep=0.75pt]   [align=left] {Engagement};
\draw (180,79) node [anchor=north west][inner sep=0.75pt]   [align=left] {\textbf{P1}};
\draw (396,52) node [anchor=north west][inner sep=0.75pt]   [align=left] {Engagement};
\draw (378,27) node [anchor=north west][inner sep=0.75pt]   [align=left] {\textbf{P2}};
\draw (405,169) node [anchor=north west][inner sep=0.75pt]   [align=left] {Preference};
\draw (378,142) node [anchor=north west][inner sep=0.75pt]   [align=left] {\textbf{M1}};
\draw (142.45,54.66) node [anchor=north west][inner sep=0.75pt]  [rotate=-47.55] [align=left] {{\scriptsize REACTS}};
\draw (168.87,132.7) node [anchor=north west][inner sep=0.75pt]  [rotate=-129.69] [align=left] {{\scriptsize REACTS}};
\draw (216.93,36.15) node [anchor=north west][inner sep=0.75pt]  [rotate=-359.65] [align=left] {{\scriptsize REACTS}};
\draw (337.33,103.12) node [anchor=north west][inner sep=0.75pt]  [rotate=-49.39] [align=left] {{\scriptsize CREATES}};
\draw (454.95,84.99) node [anchor=north west][inner sep=0.75pt]  [rotate=-98.53] [align=left] {{\scriptsize CREATES}};

\end{tikzpicture}

}
    }\hfill
    \subfigure{
        \label{sfig:gg}
        \scalebox{0.63} {

\tikzset{every picture/.style={line width=0.75pt}} %

\begin{tikzpicture}[x=0.75pt,y=0.75pt,yscale=-1,xscale=1]

\draw   (9,24) .. controls (9,16.82) and (31.83,11) .. (60,11) .. controls (88.17,11) and (111,16.82) .. (111,24) .. controls (111,31.18) and (88.17,37) .. (60,37) .. controls (31.83,37) and (9,31.18) .. (9,24) -- cycle ;
\draw   (9,144) .. controls (9,136.82) and (31.83,131) .. (60,131) .. controls (88.17,131) and (111,136.82) .. (111,144) .. controls (111,151.18) and (88.17,157) .. (60,157) .. controls (31.83,157) and (9,151.18) .. (9,144) -- cycle ;
\draw   (140,83.5) .. controls (140,76.04) and (167.31,70) .. (201,70) .. controls (234.69,70) and (262,76.04) .. (262,83.5) .. controls (262,90.96) and (234.69,97) .. (201,97) .. controls (167.31,97) and (140,90.96) .. (140,83.5) -- cycle ;
\draw   (292,143.5) .. controls (292,136.04) and (316.85,130) .. (347.5,130) .. controls (378.15,130) and (403,136.04) .. (403,143.5) .. controls (403,150.96) and (378.15,157) .. (347.5,157) .. controls (316.85,157) and (292,150.96) .. (292,143.5) -- cycle ;
\draw   (292,23.5) .. controls (292,16.04) and (319.31,10) .. (353,10) .. controls (386.69,10) and (414,16.04) .. (414,23.5) .. controls (414,30.96) and (386.69,37) .. (353,37) .. controls (319.31,37) and (292,30.96) .. (292,23.5) -- cycle ;
\draw [line width=1.5]    (60,37) -- (157.23,71.66) ;
\draw [shift={(161,73)}, rotate = 199.62] [fill={rgb, 255:red, 0; green, 0; blue, 0 }  ][line width=0.08]  [draw opacity=0] (15.6,-3.9) -- (0,0) -- (15.6,3.9) -- cycle    ;
\draw [line width=1.5]    (78,132) -- (158.37,94.68) ;
\draw [shift={(162,93)}, rotate = 155.1] [fill={rgb, 255:red, 0; green, 0; blue, 0 }  ][line width=0.08]  [draw opacity=0] (15.6,-3.9) -- (0,0) -- (15.6,3.9) -- cycle    ;
\draw [line width=1.5]    (111,24) -- (288,23.51) ;
\draw [shift={(292,23.5)}, rotate = 179.84] [fill={rgb, 255:red, 0; green, 0; blue, 0 }  ][line width=0.08]  [draw opacity=0] (15.6,-3.9) -- (0,0) -- (15.6,3.9) -- cycle    ;
\draw [line width=1.5]    (228,95) -- (299.47,133.12) ;
\draw [shift={(303,135)}, rotate = 208.07] [fill={rgb, 255:red, 0; green, 0; blue, 0 }  ][line width=0.08]  [draw opacity=0] (15.6,-3.9) -- (0,0) -- (15.6,3.9) -- cycle    ;
\draw [line width=1.5]    (347,36) -- (347.48,126) ;
\draw [shift={(347.5,130)}, rotate = 269.7] [fill={rgb, 255:red, 0; green, 0; blue, 0 }  ][line width=0.08]  [draw opacity=0] (15.6,-3.9) -- (0,0) -- (15.6,3.9) -- cycle    ;

\draw (15,17) node [anchor=north west][inner sep=0.75pt]   [align=left] {{\small Alice.Sentiment}};
\draw (19,138) node [anchor=north west][inner sep=0.75pt]   [align=left] {{\small Bob.Sentiment}};
\draw (157,77) node [anchor=north west][inner sep=0.75pt]   [align=left] {{\small P1.Engagement}};
\draw (307,137) node [anchor=north west][inner sep=0.75pt]   [align=left] {{\small M1.Preference}};
\draw (308,17) node [anchor=north west][inner sep=0.75pt]   [align=left] {{\small P2.Engagement}};

\end{tikzpicture}

}
    }
    \subfigure{
        \label{sfig:agg}
        \scalebox{0.73} {

\tikzset{every picture/.style={line width=0.75pt}} %

\begin{tikzpicture}[x=0.75pt,y=0.75pt,yscale=-1,xscale=1]

\draw   (144,36.17) .. controls (144,34.18) and (145.61,32.57) .. (147.6,32.57) -- (510.4,32.57) .. controls (512.39,32.57) and (514,34.18) .. (514,36.17) -- (514,46.97) .. controls (514,48.96) and (512.39,50.57) .. (510.4,50.57) -- (147.6,50.57) .. controls (145.61,50.57) and (144,48.96) .. (144,46.97) -- cycle ;
\draw   (21,74.37) .. controls (21,72.27) and (22.7,70.57) .. (24.8,70.57) -- (105.2,70.57) .. controls (107.3,70.57) and (109,72.27) .. (109,74.37) -- (109,85.77) .. controls (109,87.87) and (107.3,89.57) .. (105.2,89.57) -- (24.8,89.57) .. controls (22.7,89.57) and (21,87.87) .. (21,85.77) -- cycle ;
\draw   (140,74.37) .. controls (140,72.27) and (141.7,70.57) .. (143.8,70.57) -- (325.2,70.57) .. controls (327.3,70.57) and (329,72.27) .. (329,74.37) -- (329,85.77) .. controls (329,87.87) and (327.3,89.57) .. (325.2,89.57) -- (143.8,89.57) .. controls (141.7,89.57) and (140,87.87) .. (140,85.77) -- cycle ;
\draw   (360,74.57) .. controls (360,72.36) and (361.79,70.57) .. (364,70.57) -- (631,70.57) .. controls (633.21,70.57) and (635,72.36) .. (635,74.57) -- (635,86.57) .. controls (635,88.78) and (633.21,90.57) .. (631,90.57) -- (364,90.57) .. controls (361.79,90.57) and (360,88.78) .. (360,86.57) -- cycle ;
\draw [line width=1.5]    (109,80.37) -- (136,80.37) ;
\draw [shift={(140,80.37)}, rotate = 180] [fill={rgb, 255:red, 0; green, 0; blue, 0 }  ][line width=0.08]  [draw opacity=0] (15.6,-3.9) -- (0,0) -- (15.6,3.9) -- cycle    ;
\draw [line width=1.5]    (329,80.37) -- (356,80.37) ;
\draw [shift={(360,80.37)}, rotate = 180] [fill={rgb, 255:red, 0; green, 0; blue, 0 }  ][line width=0.08]  [draw opacity=0] (15.6,-3.9) -- (0,0) -- (15.6,3.9) -- cycle    ;
\draw [line width=1.5]    (445.57,50.57) -- (477.49,68.08) ;
\draw [shift={(481,70)}, rotate = 208.74] [fill={rgb, 255:red, 0; green, 0; blue, 0 }  ][line width=0.08]  [draw opacity=0] (15.6,-3.9) -- (0,0) -- (15.6,3.9) -- cycle    ;
\draw   (10,134.37) .. controls (10,132.27) and (11.7,130.57) .. (13.8,130.57) -- (265.2,130.57) .. controls (267.3,130.57) and (269,132.27) .. (269,134.37) -- (269,145.77) .. controls (269,147.87) and (267.3,149.57) .. (265.2,149.57) -- (13.8,149.57) .. controls (11.7,149.57) and (10,147.87) .. (10,145.77) -- cycle ;
\draw   (149.2,197.37) .. controls (149.2,195.27) and (150.9,193.57) .. (153,193.57) -- (501.2,193.57) .. controls (503.3,193.57) and (505,195.27) .. (505,197.37) -- (505,208.77) .. controls (505,210.87) and (503.3,212.57) .. (501.2,212.57) -- (153,212.57) .. controls (150.9,212.57) and (149.2,210.87) .. (149.2,208.77) -- cycle ;
\draw   (300,124.57) .. controls (300,117.94) and (305.37,112.57) .. (312,112.57) -- (660,112.57) .. controls (666.63,112.57) and (672,117.94) .. (672,124.57) -- (672,160.57) .. controls (672,167.2) and (666.63,172.57) .. (660,172.57) -- (312,172.57) .. controls (305.37,172.57) and (300,167.2) .. (300,160.57) -- cycle ;
\draw [line width=1.5]    (265.2,130.57) -- (291.68,93.26) ;
\draw [shift={(294,90)}, rotate = 125.37] [fill={rgb, 255:red, 0; green, 0; blue, 0 }  ][line width=0.08]  [draw opacity=0] (15.6,-3.9) -- (0,0) -- (15.6,3.9) -- cycle    ;
\draw [line width=1.5]    (459,112.57) -- (478.21,92.86) ;
\draw [shift={(481,90)}, rotate = 134.27] [fill={rgb, 255:red, 0; green, 0; blue, 0 }  ][line width=0.08]  [draw opacity=0] (15.6,-3.9) -- (0,0) -- (15.6,3.9) -- cycle    ;
\draw [line width=1.5]    (269,140.37) -- (296,140.37) ;
\draw [shift={(300,140.37)}, rotate = 180] [fill={rgb, 255:red, 0; green, 0; blue, 0 }  ][line width=0.08]  [draw opacity=0] (15.6,-3.9) -- (0,0) -- (15.6,3.9) -- cycle    ;
\draw [line width=1.5]    (265.2,149.57) -- (294.64,189.77) ;
\draw [shift={(297,193)}, rotate = 233.79] [fill={rgb, 255:red, 0; green, 0; blue, 0 }  ][line width=0.08]  [draw opacity=0] (15.6,-3.9) -- (0,0) -- (15.6,3.9) -- cycle    ;

\draw (155,36) node [anchor=north west][inner sep=0.75pt]   [align=left] {{\scriptsize [USER, REACTS, POST, CREATES, MEDIA, CREATES, POST].\textit{Engagement}}};
\draw (22.8,74.57) node [anchor=north west][inner sep=0.75pt]   [align=left] {{\scriptsize [USER].Sentiment}};
\draw (147.8,74.57) node [anchor=north west][inner sep=0.75pt]   [align=left] {{\scriptsize [USER, REACTS, POST].\textit{Engagement}}};
\draw (372,74.57) node [anchor=north west][inner sep=0.75pt]   [align=left] {{\scriptsize [USER, REACTS, POST, CREATES, MEDIA].\textit{Preference}}};
\draw (16.8,134.57) node [anchor=north west][inner sep=0.75pt]   [align=left] {{\scriptsize [USER, REACTS, POST, REACTS, USER].Sentiment}};
\draw (161,197.57) node [anchor=north west][inner sep=0.75pt]   [align=left] {{\scriptsize [USER, REACTS, POST, REACTS, USER, REACTS, POST].\textit{Engagement}}};
\draw (313,118) node [anchor=north west][inner sep=0.75pt]   [align=left] {{\scriptsize [USER, REACTS, POST, CREATES, MEDIA, CREATES, POST].\textit{Engagement}}};
\draw (320,155) node [anchor=north west][inner sep=0.75pt]   [align=left] {{\scriptsize [USER, REACTS, POST, REACTS, USER, REACTS, POST].\textit{Engagement}}};
\draw (468,134) node [anchor=north west][inner sep=0.75pt]   [align=left] {$\displaystyle \mathbf{\cap }$};

\end{tikzpicture}

}
    }
    
    \caption{
        Fragments of a relational skeleton, ground graph, and abstract ground graph corresponding to the relational causal model from Figure \ref{fig:rcm}. The arrows represent relational dependencies.
    }
    \label{fig:rcm_all}
\end{figure}
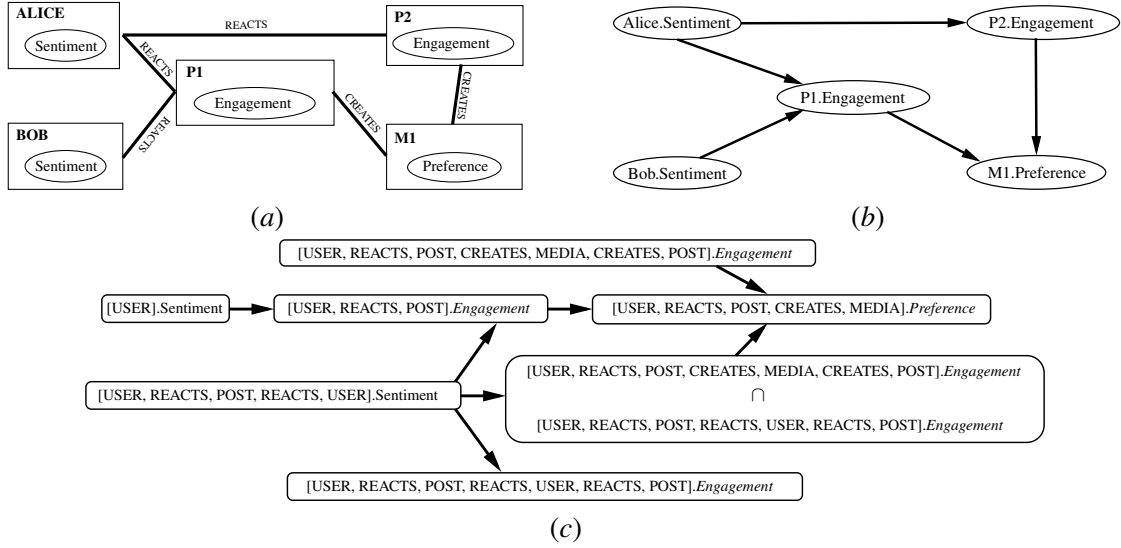

We adopt the definition of relational causal model used by previous work on relational causal discovery \citep{maier-uai13, lee-uai20}. We denote random variables and their realizations with uppercase and lowercase letters respectively, and bold to denote sets. We use a simplified Entity-Relationship model to describe relational data following previous work ~\citep{heckerman-isrl07}. A relational schema $\mathcal{S} = \langle \bm{\mathcal{E}}, \bm{\mathcal{R}}, \bm{\mathcal{A}}, card \rangle$ represents a relational domain where $\bm{\mathcal{E}}$, $\bm{\mathcal{R}}$ and $\bm{\mathcal{A}}$ refer to the set of entity, relationship and attribute classes respectively. It includes a cardinality function that constrains the number of times an entity instance can participate in a relationship. Figure \ref{fig:rcm} shows an example relational model that describes a simplified user-media engagement system. The model consists of three entity classes (User, Post, and Media), and two relationship classes (Reacts and Creates). Each entity class has a single attribute. The cardinality constraints are shown with crow’s feet notation— a user can react to multiple posts, multiple users can react to a post, a post can be created by only a single media entity.

\sloppy A \textit{relational skeleton} $s$ is 
an instantiation of a relational schema $\mathcal{S}$, represented by an undirected graph of entities and relationships. Figure \ref{sfig:skel} shows an example skeleton of the relational model from Figure \ref{fig:rcm}. It shows that Alice and Bob both react to post P1. Alice also reacts to post P2. P1 and P2 both are created by media M1. There could be infinitely many possible skeletons for a given RCM. We  denote the set of all skeletons for schema $\mathcal{S}$ as $\sum_\mathcal{S}$. 

Given a relational schema, we can specify relational paths, which intuitively correspond to ways of traversing the schema. For the schema shown in Figure \ref{fig:rcm}, possible paths include $[User, Reacts, Post]$ (the posts a user reacts to), as well as $[User, Reacts, Post, Reacts, User]$ (other users who react to the same post). \textit{Relational variables} consist of a relational path and an attribute. For example, the relational variable $[User, Reacts, Post].Engagement$ corresponds to the overall engagements of the post that a user reacts to. The first item (i.e. $User$) in the relational path corresponds to the \textit{perspective} of the relational variable. 
A terminal set, $P|_{i_k}$ is the terminal item on the relational path $P = [I_j, . . . , I_k]$ consisting of instances of class $I_k \in \bm{\mathcal{E}} \cup \bm{\mathcal{R}}$.

A relational causal model $\mathcal{M} = \langle \mathcal{S},\mathcal{D} \rangle$, is a collection of relational dependencies defined over schema $\mathcal{S}$. \textit{Relational dependencies} consist of two relational variables, cause and effect. As an example, consider the following relational dependency $[Post, Reacts, User].Sentiment \rightarrow [Post].Engagement$ which states that the engagement of a post is affected by the actions of users who react on that post. In Figure \ref{fig:rcm}, the arrows represents relational dependencies. Note that, all causal dependencies are defined with respect to a specific perspective.

\subsection{Ground Graph and Abstract Ground Graph}
\label{sec:gg_agg}

A realization of a relational model $\mathcal{M}$ with a relational skeleton is referred to as the \textit{ground graph} $GG_\mathcal{M}$. It is a directed graph consisting attributes of entities in the skeleton as nodes and relational dependencies among them as edges. A single relational model is actually a template for a set of possible ground graphs based on the given schema. A ground graph has the same semantic as a graphical model. Given a relational model $\mathcal{M}$ and a relational skeleton $s$, we can construct a ground graph \ggms{} by applying the relational dependencies as specified in the model to the specific instances of the relational skeleton. Figure \ref{sfig:gg} shows the ground graph for the relational model from Figure \ref{fig:rcm}. 
The relational dependencies present in the given RCM may temp one to conclude a conditional independence statement: $[User].\mstance \indep [Media].Preference | [Post].Engagement$. However, when the model is unrolled in a ground graph we see the corresponding statement is not true (i.e. $[Bob].\mstance \nindep [M1].Preference | [P1].Engagement$) since there is an alternative path through $[Alice].\mstance$ and $[P2].Engagement$ which is activated when conditioned on $[P1].Engagement$. 
This shows why generalization over all possible ground graphs is hard.

An \textit{abstract ground graph} (AGG) is an abstract representation that solves the problem of generalization by capturing the consistent dependencies in all possible ground graphs and representing them as a directed graph. AGGs are defined for a specific perspective and \textit{hop threshold}, $h$. Hop threshold refers to the maximum length of the relational paths allowed in a specific AGG. There are two types of nodes in AGG, relational variables and intersection variables. Intersection variables are constructed from pairs of relational variables with non-empty intersections~\citep{maier-arxiv13}. For example, $[User, Reacts, Post]$ refers to the set of posts a user reacts to whereas $[User, Reacts, Post, Reacts, User, Reacts, Post]$ refers to the set of other posts reacted by other users who also reacted to the same post as the given user. These two sets of posts can overlap which is reflected by the corresponding intersection variable. Edges between a pair of nodes of AGG exist if the instantiations of those constituting relational variables contain a dependent pair in all ground graphs. We define  $\bm{\overline{W}}$ as the set of nodes augmented with their corresponding intersection nodes for the set of relational variables $\bm{\overline{W}}$:
$\bm{\overline{W}} = \bm{W} \cup \bigcup_{W \in \bm{W}} \{W \cap W' | W \cap W'$ is an intersection node in \aggms{} $\}$. Figure \ref{sfig:agg} presents the AGG from the perspective of $User$ and with $h = 6$ corresponding to the model from Figure \ref{fig:rcm}. The AGG shows that the sentiment of a user is no longer independent of media preference given just engagements of the corresponding posts the user reacts to. We also need to condition on the sentiment of other users who reacted to the same post.

\subsection{Relational d-separation}
Relational model describes a template for many possible instantiations of a relational schema. In order to reason about conditional independence facts entailed in all instances of a given relational template, \citet{maier-arxiv13} develop a relational counterpart for $d$-separation criteria. Two sets of relational variables $\bm{X}$ and $\bm{Y}$ from a given perspective are said to be $d$-separated by another set $\bm{Z}$ if and only if the terminal sets of $\bm{X}$ and $\bm{Y}$ are $d$-separated by the terminal set of $\bm{Z}$ from the given perspective in all possible ground graphs of the given model. \citet{maier-arxiv13} introduce AGG as a means to reason about relational $d$-separation queries from a given perspective. The soundness and completeness of relational $d$-separation for AGG relies on the following assumptions:

\begin{assumption}
\label{asm:acyclic}
The relational model is acyclic.
\end{assumption}

\begin{assumption}
\label{asm:conf}
There are no unobserved confounders in the relational model.
\end{assumption}

Here, soundness refers to the fact that any $d$-separation relationship found in AGG implies corresponding $d$-separation relationship in all ground graphs it represents whereas completeness claims that the $d$-separation facts that hold across all ground graphs are also entailed by $d$-separation on the AGG. The soundness of relational $d$-separation under AGG is already proved by \citet{maier-arxiv13}. However, the conditions under which completeness holds have been an open question since \cite{lee-uai15} show that the initial formulation of \citet{maier-arxiv13} is not complete. We resolve this question in Section \ref{sec:complete} and show that AGG is also complete under certain realistic assumptions.

\section{Relational Systems With Cycles}

In this section, we develop definitions for cyclic relational causal models (RCM) and an abstract representation that allows causal reasoning and discovery. We introduce a criterion for abstracting cyclic RCMs and provide proofs for its correctness.

\subsection{Cyclic RCM}

An RCM is cyclic when the set of relational dependencies form one or more arbitrary length cycles. 

\setcounter{theorem}{3}
\begin{definition} [Cyclic RCM]
\label{dfn:crcm}
A relational model $\mathcal{M} = (\mathcal{S}, \mathcal{D})$ is said to be cyclic if the set of relational dependencies $\mathcal{D}$ constructs one or more directed cycles of arbitrary length.
\end{definition}

Cycles in RCM represent equilibrium states among a set of nodes in the ground graphs. It implies that the ground graphs are no longer guaranteed to be DAGs. In order to facilitate this, we propose a revised definition of relational dependency provided by ~\citet{maier-arxiv13} by relaxing the restriction of having different attribute classes for cause and effect.

\begin{definition} [Relational Dependency]
A relational dependency $[I_j, ... , I_k].X' \rightarrow [I_j].X$ is a directed probabilistic dependence from any attribute class $X'$ to $X$ through the relational path $[I_j, ... , I_k]$ such that $I_j, ... , I_k \in \bm{\mathcal{E}} \cup \bm{\mathcal{R}}$, $X, X' \in \bm{\mathcal{A}}$. Note that it is possible to have $X = X'$.
\end{definition}

\begin{figure}
    \centering
    
    \subfigure{
        \label{sfig:crcm_gg}
        \scalebox{0.5} {

\tikzset{every picture/.style={line width=0.75pt}} %

\begin{tikzpicture}[x=0.75pt,y=0.75pt,yscale=-1,xscale=1]

\draw   (38,61) .. controls (38,53.82) and (60.83,48) .. (89,48) .. controls (117.17,48) and (140,53.82) .. (140,61) .. controls (140,68.18) and (117.17,74) .. (89,74) .. controls (60.83,74) and (38,68.18) .. (38,61) -- cycle ;
\draw   (38,181) .. controls (38,173.82) and (60.83,168) .. (89,168) .. controls (117.17,168) and (140,173.82) .. (140,181) .. controls (140,188.18) and (117.17,194) .. (89,194) .. controls (60.83,194) and (38,188.18) .. (38,181) -- cycle ;
\draw   (169,120.5) .. controls (169,113.04) and (196.31,107) .. (230,107) .. controls (263.69,107) and (291,113.04) .. (291,120.5) .. controls (291,127.96) and (263.69,134) .. (230,134) .. controls (196.31,134) and (169,127.96) .. (169,120.5) -- cycle ;
\draw   (321,180.5) .. controls (321,173.04) and (345.85,167) .. (376.5,167) .. controls (407.15,167) and (432,173.04) .. (432,180.5) .. controls (432,187.96) and (407.15,194) .. (376.5,194) .. controls (345.85,194) and (321,187.96) .. (321,180.5) -- cycle ;
\draw   (321,60.5) .. controls (321,53.04) and (348.31,47) .. (382,47) .. controls (415.69,47) and (443,53.04) .. (443,60.5) .. controls (443,67.96) and (415.69,74) .. (382,74) .. controls (348.31,74) and (321,67.96) .. (321,60.5) -- cycle ;
\draw [line width=1.5]    (257,132) -- (328.47,170.12) ;
\draw [shift={(332,172)}, rotate = 208.07] [fill={rgb, 255:red, 0; green, 0; blue, 0 }  ][line width=0.08]  [draw opacity=0] (15.6,-3.9) -- (0,0) -- (15.6,3.9) -- cycle    ;
\draw [line width=1.5]    (376,73) -- (376.48,163) ;
\draw [shift={(376.5,167)}, rotate = 269.7] [fill={rgb, 255:red, 0; green, 0; blue, 0 }  ][line width=0.08]  [draw opacity=0] (15.6,-3.9) -- (0,0) -- (15.6,3.9) -- cycle    ;
\draw [line width=1.5]    (323.79,65) .. controls (261.73,74.85) and (203.56,76.94) .. (138.75,65.53) ;
\draw [shift={(135.79,65)}, rotate = 10.3] [fill={rgb, 255:red, 0; green, 0; blue, 0 }  ][line width=0.08]  [draw opacity=0] (15.6,-3.9) -- (0,0) -- (15.6,3.9) -- cycle    ;
\draw [line width=1.5]    (169,120.5) .. controls (110.89,105.54) and (114.65,97.57) .. (91.62,76.36) ;
\draw [shift={(89,74)}, rotate = 41.5] [fill={rgb, 255:red, 0; green, 0; blue, 0 }  ][line width=0.08]  [draw opacity=0] (15.6,-3.9) -- (0,0) -- (15.6,3.9) -- cycle    ;
\draw [line width=1.5]    (96.79,74) .. controls (130.63,81.52) and (153.86,99.64) .. (168.99,113.42) ;
\draw [shift={(171.79,116)}, rotate = 223.03] [fill={rgb, 255:red, 0; green, 0; blue, 0 }  ][line width=0.08]  [draw opacity=0] (15.6,-3.9) -- (0,0) -- (15.6,3.9) -- cycle    ;
\draw [line width=1.5]    (136.79,56) .. controls (181.64,43.32) and (276.87,44.91) .. (320.53,54.27) ;
\draw [shift={(323.79,55)}, rotate = 193.39] [fill={rgb, 255:red, 0; green, 0; blue, 0 }  ][line width=0.08]  [draw opacity=0] (15.6,-3.9) -- (0,0) -- (15.6,3.9) -- cycle    ;
\draw [line width=1.5]    (189.3,131) .. controls (183.51,154.16) and (163.75,165.21) .. (131.82,172.25) ;
\draw [shift={(128.3,173)}, rotate = 348.37] [fill={rgb, 255:red, 0; green, 0; blue, 0 }  ][line width=0.08]  [draw opacity=0] (15.6,-3.9) -- (0,0) -- (15.6,3.9) -- cycle    ;
\draw [line width=1.5]    (128.3,173) .. controls (133.97,153.15) and (163.75,138.67) .. (185.58,132.07) ;
\draw [shift={(189.3,131)}, rotate = 164.74] [fill={rgb, 255:red, 0; green, 0; blue, 0 }  ][line width=0.08]  [draw opacity=0] (15.6,-3.9) -- (0,0) -- (15.6,3.9) -- cycle    ;

\draw (44,54) node [anchor=north west][inner sep=0.75pt]   [align=left] {{\small Alice.Sentiment}};
\draw (48,174) node [anchor=north west][inner sep=0.75pt]   [align=left] {{\small Bob.Sentiment}};
\draw (185,114) node [anchor=north west][inner sep=0.75pt]   [align=left] {{\small P1.Engagement}};
\draw (335,172) node [anchor=north west][inner sep=0.75pt]   [align=left] {{\small M1.Preference}};
\draw (339,54) node [anchor=north west][inner sep=0.75pt]   [align=left] {{\small P2.Engagement}};

\end{tikzpicture}

}
    }\hfill
    \subfigure{
        \label{sfig:crcm_agg}
        \scalebox{0.5} {

\tikzset{every picture/.style={line width=0.75pt}} %

\begin{tikzpicture}[x=0.75pt,y=0.75pt,yscale=-1,xscale=1]

\draw   (144,36.17) .. controls (144,34.18) and (145.61,32.57) .. (147.6,32.57) -- (510.4,32.57) .. controls (512.39,32.57) and (514,34.18) .. (514,36.17) -- (514,46.97) .. controls (514,48.96) and (512.39,50.57) .. (510.4,50.57) -- (147.6,50.57) .. controls (145.61,50.57) and (144,48.96) .. (144,46.97) -- cycle ;
\draw   (21,74.37) .. controls (21,72.27) and (22.7,70.57) .. (24.8,70.57) -- (105.2,70.57) .. controls (107.3,70.57) and (109,72.27) .. (109,74.37) -- (109,85.77) .. controls (109,87.87) and (107.3,89.57) .. (105.2,89.57) -- (24.8,89.57) .. controls (22.7,89.57) and (21,87.87) .. (21,85.77) -- cycle ;
\draw   (140,74.37) .. controls (140,72.27) and (141.7,70.57) .. (143.8,70.57) -- (325.2,70.57) .. controls (327.3,70.57) and (329,72.27) .. (329,74.37) -- (329,85.77) .. controls (329,87.87) and (327.3,89.57) .. (325.2,89.57) -- (143.8,89.57) .. controls (141.7,89.57) and (140,87.87) .. (140,85.77) -- cycle ;
\draw   (360,74.57) .. controls (360,72.36) and (361.79,70.57) .. (364,70.57) -- (631,70.57) .. controls (633.21,70.57) and (635,72.36) .. (635,74.57) -- (635,86.57) .. controls (635,88.78) and (633.21,90.57) .. (631,90.57) -- (364,90.57) .. controls (361.79,90.57) and (360,88.78) .. (360,86.57) -- cycle ;
\draw [line width=1.5]    (329,80.37) -- (356,80.37) ;
\draw [shift={(360,80.37)}, rotate = 180] [fill={rgb, 255:red, 0; green, 0; blue, 0 }  ][line width=0.08]  [draw opacity=0] (15.6,-3.9) -- (0,0) -- (15.6,3.9) -- cycle    ;
\draw [line width=1.5]    (445.57,50.57) -- (473.6,67.9) ;
\draw [shift={(477,70)}, rotate = 211.72] [fill={rgb, 255:red, 0; green, 0; blue, 0 }  ][line width=0.08]  [draw opacity=0] (15.6,-3.9) -- (0,0) -- (15.6,3.9) -- cycle    ;
\draw   (10,133.37) .. controls (10,131.27) and (11.7,129.57) .. (13.8,129.57) -- (265.2,129.57) .. controls (267.3,129.57) and (269,131.27) .. (269,133.37) -- (269,144.77) .. controls (269,146.87) and (267.3,148.57) .. (265.2,148.57) -- (13.8,148.57) .. controls (11.7,148.57) and (10,146.87) .. (10,144.77) -- cycle ;
\draw   (149.2,196.37) .. controls (149.2,194.27) and (150.9,192.57) .. (153,192.57) -- (501.2,192.57) .. controls (503.3,192.57) and (505,194.27) .. (505,196.37) -- (505,207.77) .. controls (505,209.87) and (503.3,211.57) .. (501.2,211.57) -- (153,211.57) .. controls (150.9,211.57) and (149.2,209.87) .. (149.2,207.77) -- cycle ;
\draw   (300,123.57) .. controls (300,116.94) and (305.37,111.57) .. (312,111.57) -- (660,111.57) .. controls (666.63,111.57) and (672,116.94) .. (672,123.57) -- (672,159.57) .. controls (672,166.2) and (666.63,171.57) .. (660,171.57) -- (312,171.57) .. controls (305.37,171.57) and (300,166.2) .. (300,159.57) -- cycle ;
\draw [line width=1.5]    (459,111.57) -- (478.08,93.73) ;
\draw [shift={(481,91)}, rotate = 136.92] [fill={rgb, 255:red, 0; green, 0; blue, 0 }  ][line width=0.08]  [draw opacity=0] (15.6,-3.9) -- (0,0) -- (15.6,3.9) -- cycle    ;
\draw [line width=1.5]    (140,74.37) .. controls (128.42,66.61) and (123.71,69.15) .. (112.69,73.09) ;
\draw [shift={(109,74.37)}, rotate = 341.67] [fill={rgb, 255:red, 0; green, 0; blue, 0 }  ][line width=0.08]  [draw opacity=0] (15.6,-3.9) -- (0,0) -- (15.6,3.9) -- cycle    ;
\draw [line width=1.5]    (299,136) .. controls (287.36,133.31) and (282.14,134.64) .. (271.85,137.36) ;
\draw [shift={(268,138.37)}, rotate = 345.46] [fill={rgb, 255:red, 0; green, 0; blue, 0 }  ][line width=0.08]  [draw opacity=0] (15.6,-3.9) -- (0,0) -- (15.6,3.9) -- cycle    ;
\draw [line width=1.5]    (287,193) .. controls (266.95,182.5) and (264.22,180.2) .. (258.79,154.75) ;
\draw [shift={(258,151)}, rotate = 78.31] [fill={rgb, 255:red, 0; green, 0; blue, 0 }  ][line width=0.08]  [draw opacity=0] (15.6,-3.9) -- (0,0) -- (15.6,3.9) -- cycle    ;
\draw [line width=1.5]    (109,85.77) .. controls (124.84,88.61) and (127.52,87.4) .. (136.11,86.24) ;
\draw [shift={(140,85.77)}, rotate = 174.15] [fill={rgb, 255:red, 0; green, 0; blue, 0 }  ][line width=0.08]  [draw opacity=0] (15.6,-3.9) -- (0,0) -- (15.6,3.9) -- cycle    ;
\draw [line width=1.5]    (269,144.77) .. controls (284.84,147.61) and (287.52,146.4) .. (296.11,145.24) ;
\draw [shift={(300,144.77)}, rotate = 174.15] [fill={rgb, 255:red, 0; green, 0; blue, 0 }  ][line width=0.08]  [draw opacity=0] (15.6,-3.9) -- (0,0) -- (15.6,3.9) -- cycle    ;
\draw [line width=1.5]    (265.2,148.57) .. controls (283.61,158.27) and (285.77,172.79) .. (290.81,188.45) ;
\draw [shift={(292,192)}, rotate = 250.56] [fill={rgb, 255:red, 0; green, 0; blue, 0 }  ][line width=0.08]  [draw opacity=0] (15.6,-3.9) -- (0,0) -- (15.6,3.9) -- cycle    ;
\draw [line width=1.5]    (248,128) .. controls (256.46,107.32) and (271.99,101.66) .. (289.6,91.09) ;
\draw [shift={(293,89)}, rotate = 147.72] [fill={rgb, 255:red, 0; green, 0; blue, 0 }  ][line width=0.08]  [draw opacity=0] (15.6,-3.9) -- (0,0) -- (15.6,3.9) -- cycle    ;
\draw [line width=1.5]    (293,89) .. controls (286.32,109.06) and (275.07,118.17) .. (251.43,126.78) ;
\draw [shift={(248,128)}, rotate = 340.91] [fill={rgb, 255:red, 0; green, 0; blue, 0 }  ][line width=0.08]  [draw opacity=0] (15.6,-3.9) -- (0,0) -- (15.6,3.9) -- cycle    ;

\draw (155,35) node [anchor=north west][inner sep=0.75pt]   [align=left] {{\scriptsize [USER, REACTS, POST, CREATES, MEDIA, CREATES, POST].\textit{Engagement}}};
\draw (22.8,73.57) node [anchor=north west][inner sep=0.75pt]   [align=left] {{\scriptsize [USER].Sentiment}};
\draw (143.8,73.57) node [anchor=north west][inner sep=0.75pt]   [align=left] {{\scriptsize [USER, REACTS, POST].\textit{Engagement}}};
\draw (372,74.57) node [anchor=north west][inner sep=0.75pt]   [align=left] {{\scriptsize [USER, REACTS, POST, CREATES, MEDIA].\textit{Preference}}};
\draw (16.8,132.57) node [anchor=north west][inner sep=0.75pt]   [align=left] {{\scriptsize [USER, REACTS, POST, REACTS, USER].Sentiment}};
\draw (161,195.57) node [anchor=north west][inner sep=0.75pt]   [align=left] {{\scriptsize [USER, REACTS, POST, REACTS, USER, REACTS, POST].\textit{Engagement}}};
\draw (313,119) node [anchor=north west][inner sep=0.75pt]   [align=left] {{\scriptsize [USER, REACTS, POST, CREATES, MEDIA, CREATES, POST].\textit{Engagement}}};
\draw (320,156) node [anchor=north west][inner sep=0.75pt]   [align=left] {{\scriptsize [USER, REACTS, POST, REACTS, USER, REACTS, POST].\textit{Engagement}}};
\draw (468,135) node [anchor=north west][inner sep=0.75pt]   [align=left] {$\displaystyle \mathbf{\cap }$};

\end{tikzpicture}

}
    }

    \caption{
        Ground graph and \sagg{} for the cyclic relational model shown in Figure \ref{fig:rcm}. 
    }
    
    \label{fig:crcm}
\end{figure}
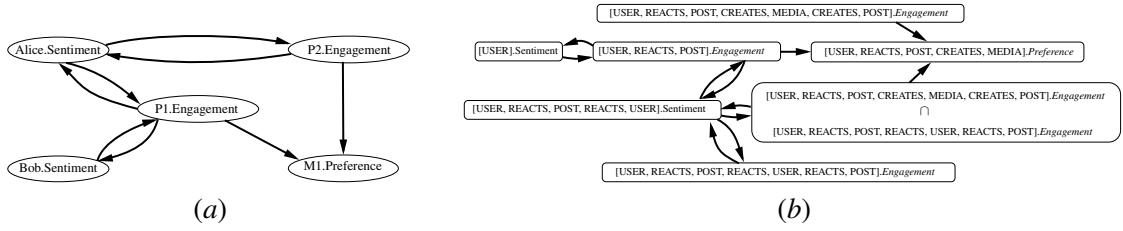

Figure \ref{fig:rcm} shows an example relational model with cyclic dependencies (i.e. dashed arrows). We see a pair of dependencies $[Post, Reacts, User].\mstance \rightarrow [Post].Engagement$ and $[Post].Engagement \rightarrow [Post, Reacts, User].\mstance$ which are inverse to each other and form a feedback loop. 
However, this mere feedback loop prohibits the use of AGG to answer relational causal query that asks whether a user's Sentiment about a post they reacted to is independent of the preference of media given the posts.
Unfortunately, the work by \citet{maier-arxiv13} is not sufficient to reason about conditional independence relationships in the ground graphs of such relational models since it contain cycles. This motivates us to introduce a new criterion that enables the abstraction of relational queries with cyclic dependencies over all ground graphs.

\subsection{Relational $\sigma$-separation}
Conditional independence facts are only useful when they hold across all ground graphs that are consistent with the model. \citet{maier-arxiv13} show that relational $d$-separation is sufficient to achieve that for acyclic models. However, such abstraction is not possible for cyclic models since the correctness of $d$-separation is not guaranteed for cyclic graphical models \citep{spirtes-uai95, neal-jair00}. In this work, we propose the following definition of relational $\sigma$-separation specifically for cyclic relational models:

\begin{definition} [Relational $\sigma$-separation]
Let $\bm{X}$, $\bm{Y}$, and $\bm{Z}$ be three distinct sets of relational variables with the same perspective $B \in \bm{\mathcal{E}} \cup \bm{\mathcal{R}}$ defined over relational schema $\mathcal{S}$. Then, for relational model structure $\mathcal{M}$, $\bm{X}$ and $\bm{Y}$ are $\sigma$-separated by $\bm{Z}$ if and only if, for all skeletons $s \in \sum_\mathcal{S}$, $\bm{X}|_b$ and $\bm{Y}|_b$ are $\sigma$-separated by $\bm{Z}|_b$ in ground graph $GG_{\mathcal{M}_s}$ for all instances $b \in s(B)$ where $s(B)$ refers to the instances of $B$ in skeleton $s$.
\end{definition}

The definition directly follows from the definition of relational $d$-separation. If there exists even one skeleton and faithful distribution represented by the relational model for which $\bm{X} \not\!\perp\!\!\!\perp \bm{Y} | \bm{Z}$, then $\bm{X}|_b$ and $\bm{Y}|_b$ are not $\sigma$-separated by $\bm{Z}|_b$ for $b \in s(B)$.

\subsection{$\sigma$-Abstract Ground Graph}
We refer to the lifted representation for cyclic RCMs as $\sigma$-\textit{abstract ground graph} or $\sigma$-AGG. A $\sigma$-AGG is constructed using the same \textit{extend} method used to construct AGG \citep{maier-arxiv13}. 

\begin{definition} [$\sigma$-Abstract Ground Graph]
\label{dfn:saggm}
An abstract ground graph \saggm{} $ = (V, E)$ for relational model structure $\mathcal{M} = (\mathcal{S}, \mathcal{D})$, perspective $B \in \bm{\mathcal{E}} \cup \bm{\mathcal{R}}$, and hop threshold $h \in \mathbb{N}^0$ is a directed graph that abstracts the dependencies $\mathcal{D}$ for all ground graphs \ggms{}, where $s \in \sum_\mathcal{S}$. The \saggms{} is a directed cyclic graph with the following nodes and edges:

\begin{enumerate}[parsep=0pt]
    \item $V = RV \cup IV$, where
    \begin{enumerate}
        \item $RV$ is the set of relational variables with a path of length at most h + 1.
        \item IV are intersection variables between pairs of relational variables that could intersect
    \end{enumerate}
    \item $E = RV\!E \cup IV\!E$, where
    \begin{enumerate}
        \item $RV\!E \subset RV \times RV$ are the relational variable edges
        \item $IV\!E \subset (IV \times RV) \cup (RV \times IV)$ are the intersection variable edges. This is the set of edges that intersection variables “inherit” from the relational variables that they were created from
    \end{enumerate}
\end{enumerate}
\end{definition}

Since the construction of an AGG and a \saggms{} is identical, they share mostly identical properties as defined by \citet{maier-arxiv13} for AGG. The main difference being the existence of cycles. Consequently, the goal of \saggms{} is to reason about relational $\sigma$-separation queries instead of relational $d$-separation. Figure \ref{sfig:crcm_agg} shows the \saggms{} corresponding to the cyclic RCM in Figure \ref{fig:rcm} with a pairwise feedback loop. It is similar to the AGG in Figure \ref{sfig:agg} but allows cycles without violating the conditional independence statements under $\sigma$-separation which are otherwise undefined with $d$-separation.

\subsection{Soundness and Completeness of Relational $\sigma$-separation}
\label{sec:complete}

In order to discuss soundness and completeness of relational $\sigma$-separation we first address the open problem of necessary conditions for the completeness of relational $d$-separation. Previous work has shown that the original claim of completeness of relational $d$-separation by ~\citet{maier-uai13} cannot be guaranteed for any relational model~\citep{lee-uai15}. A counterexample has been developed as well. In this work, we show that relational $d$-separation is complete under the following assumption:

\begin{assumption}
\label{asm:counter}
The degree of any entity in the relational skeleton is greater than 1.
\end{assumption}

Note that, this assumption is about the topology of the ground graphs, i.e., the network which defines how entities are connected to each other. It only allows entities that are connected to at least two other entities. For example, if the entities are users in a social network, the framework would only consider users who have degree at least two, i.e., are connected to at least two other users. 
While this restricts the space of graph topologies allowable under the results in this work, many networks observed in real world domains, such as social networks, have minimum degree greater than one. 
We introduce the following lemma that establishes sufficient conditions for AGGs to be realizable in ground graphs. This result may be of independent interest, since it provides sufficient conditions for soundness in the original presentation of relational $d$-separation under additional assumptions~\citep{maier-arxiv13, lee-uai15}.

\setcounter{theorem}{0}
\begin{lemma}
\label{lm:noempty}
Under assumption \ref{asm:counter},
every abstract ground graph can be realized as a ground graph.
That is, for every acyclic relational model $\mathcal{M}$ and skeleton $s \in \sum_{\mathcal{S}}$ any relational variable in \aggms{} has non-empty terminal sets in some ground graph \ggms{}.
\end{lemma}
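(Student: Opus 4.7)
The plan is to fix an arbitrary acyclic relational model $\mathcal{M}$, an arbitrary skeleton $s \in \sum_{\mathcal{S}}$ satisfying Assumption \ref{asm:counter}, and an arbitrary relational variable $V = [I_j, I_1, \ldots, I_k].X$ in $\aggms$, and then prove that there exists a perspective instance $b \in s(I_j)$ for which the terminal set $V|_b$ is non-empty in $\ggms$. The natural induction is on the length $k$ of the relational path $P = [I_j, I_1, \ldots, I_k]$. The key conceptual point---and the reason the universal quantification over $s$ goes through---is that Assumption \ref{asm:counter} supplies a uniform lower bound on local connectivity of \emph{every} skeleton in scope, so the extension step of the induction never gets stuck regardless of which admissible skeleton we were handed.

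For the base case ($k = 0$), the path $[I_j]$ has the singleton terminal set $\{b\}$ for any $b \in s(I_j)$, which is non-empty because a well-defined skeleton instantiates every schema class that appears in $\mathcal{M}$. For the inductive step, suppose the claim holds for every prefix of length $n$: there is a perspective instance $b$ and a path-respecting traversal $b = c_0, c_1, \ldots, c_n$ through $s$ honoring bridge-burning semantics (no consecutively-repeated relationship instance). I would extend by choosing $c_{n+1} \in s(I_{n+1})$ incident to $c_n$. When $I_n$ is an entity and $I_{n+1}$ a relationship class, Assumption \ref{asm:counter} gives $c_n$ degree at least two, so at most one incident relationship of class $I_{n+1}$ coincides with $c_{n-1}$ and another can be selected. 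When $I_n$ is a relationship class and $I_{n+1}$ an entity, $c_{n+1}$ is one of the fixed endpoints of $c_n$, and the one distinct from $c_{n-1}$ exists by the arity of the relationship in $\mathcal{S}$.

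The main obstacle I expect is a mismatch of granularity: Assumption \ref{asm:counter} bounds an entity's total incidence count, whereas the induction at an entity-to-relationship extension really needs at least two incidences \emph{in the specific relationship class} $I_{n+1}$ used by the path. To close this gap I would combine Assumption \ref{asm:counter} with the schema's cardinality function $card$: any path instantiated in $\aggms$ must be consistent with $card$, so $c_n$ necessarily participates in at least one $I_{n+1}$-relationship, and Assumption \ref{asm:counter} supplies an alternative whenever the unique candidate coincides with $c_{n-1}$. A secondary subtlety is that intersection nodes of $\aggms$ must also be realized; I would argue separately that if the inductive construction produces, for the same $b$, traversals along both defining relational paths reaching a common skeleton instance, the intersection node inherits a non-empty realization at $b$. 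Revisiting the \citet{lee-uai15} counterexample at this stage would let me confirm precisely where Assumption \ref{asm:counter} becomes indispensable and argue that no stronger skeleton-level hypothesis is needed for the universal claim over $s$.
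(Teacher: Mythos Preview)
Your route is genuinely different from the paper's. The paper does not induct on path length or build traversals through the skeleton at all. Instead it argues by characterizing the failure mode: it claims that an AGG node can have an empty terminal set only when (i) an intersection variable is involved and (ii) that intersection sits on a path whose relationships are all one-to-one, which is precisely the pattern behind the \citet{lee-uai15} counterexample. The proof then observes that a skeleton satisfying Assumption~\ref{asm:counter} cannot realize a pure one-to-one chain, so condition (ii) fails and non-empty terminal sets are guaranteed. Your approach is more constructive and would, if it worked, give a self-contained argument independent of the counterexample analysis; the paper's approach is shorter because it only has to kill the one structural pattern that actually causes trouble.

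The obstacle you flag, however, is a genuine gap rather than a detail to be filled in. Assumption~\ref{asm:counter} bounds the \emph{total} degree of an entity instance, and your proposed patch (``$card$ guarantees at least one $I_{n+1}$-incidence, and Assumption~\ref{asm:counter} supplies an alternative when that one coincides with $c_{n-1}$'') does not follow: when an entity class participates in several relationship classes, total degree $\geq 2$ is perfectly compatible with having exactly one---or zero---incident instances of the specific class $I_{n+1}$ needed at that step, so the extension can stall even in skeletons satisfying the assumption. The paper never faces this granularity mismatch because it does not attempt to extend arbitrary paths step by step. Relatedly, your handling of intersection nodes is too light: in the paper's argument intersection variables are the \emph{crux} of the problem, not a side case, and the resolution hinges on showing that under Assumption~\ref{asm:counter} both defining paths of the intersection can be realized simultaneously to a common terminal instance. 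Your sketch (``argue separately that \ldots the intersection node inherits a non-empty realization'') does not yet supply that argument.
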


\begin{proof}

We first consider the conditions under which empty terminal sets can occur, resulting in an AGG that is unrealizable in the ground graphs. There are two necessary and sufficient conditions for empty terminal sets to appear in all ground graphs corresponding to an AGG. First, there must be at least one intersection variable present in the AGG. If no intersection variable exist in the AGG, then the completeness proof of relational $d$-separation by \citet{maier-arxiv13} holds. The second condition is that the intersection must be on a path consisting of only one-to-one relationships. In order to understand this condition, lets look at an example with the following relational paths from a hypothetical relational model which is a generalization of the counterexample given by \citet{lee-uai15} \footnote{The complete counterexample and figure explaining it is given in the Appendix}:

\begin{table}[H]
\centering
\begin{tabular}{ll}
$\bullet\: P = [E_i, \dots, R_j, \dots, E_k]$ & $\bullet\: Q = [E_i, \dots, R_j, \dots, R_m, E_k, \dots, E_q]$  \\
$\bullet\: S = [E_i, \dots, R_j, \dots, R_m, \dots, E_s]$ & $\bullet\: S' = P + [R_m, \dots, E_s]$ \\ 
\end{tabular}
\end{table}

\noindent
where $E_i, E_k, E_q, E_s$ are some entity classes, $R_j, R_m$ are relationship classes, $``\dots"$ are arbitrary valid sequences of entities and relationships, and $+$ represents the concatenation of relational paths. Let's assume two relational dependencies exist in the given model, $P.X \rightarrow Q.Y$ and $S.Z \rightarrow Q.Y$ where $X, Y, Z$ are attributes of corresponding entity classes. By definition, the corresponding edges $P.X \rightarrow Q.Y$, $S.Z \rightarrow Q.Y$ appear in the AGG. Since $S$ and $S'$ are intersectable an additional edge $S.Z \cap S'.Z \rightarrow Q.Y$ also appears in the AGG. Such a model can be realized in many possible ground graphs. However, if we restrict the relationships to be strictly one-to-one, then there is only one skeleton structure possible to satisfy the relational dependencies at the cost of $S'.Z$ having empty terminal sets since an intance of $R_m$ can connect to only one instance of $E_k$. If we allow many-to-many relationships then we can always construct a skeleton where an instance of $R_m$ connects to two instances of $E_k$ to produce non-empty terminal sets for both $Q$ and $S'$.

Since assumption \ref{asm:counter} prohibits the second condition, it essentially implies that any relational variable in \aggms{} results into non-empty terminal sets in corresponding ground graphs for every acyclic relational model $\mathcal{M}$ and skeleton $s \in \sum_\mathcal{S}$ which completes the proof for Lemma \ref{lm:noempty}.
\end{proof}

The following proposition establishes the completeness of AGG for relational $d$-separation under the assumption of a minimum degree greater than 1.

\setcounter{theorem}{0}
\begin{proposition}
\label{prop:sc}
AGG is sound and complete for relational $d$-separation under assumption \ref{asm:counter}.
\end{proposition}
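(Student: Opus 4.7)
The plan is to split the proposition into its two halves. Soundness of AGG for relational $d$-separation is already established by \citet{maier-arxiv13} and does not rely on Assumption~\ref{asm:counter}, so I would simply cite that direction and concentrate on completeness. For completeness, the natural strategy is the contrapositive: fix sets of relational variables $\bm{X}, \bm{Y}, \bm{Z}$ from a perspective $B$, assume that some node in $\bm{\overline{X}}$ is $d$-connected to some node in $\bm{\overline{Y}}$ given $\bm{\overline{Z}}$ in \aggms{} by a walk $\pi$, and exhibit a skeleton $s \in \sum_{\mathcal{S}}$ and an instance $b \in s(B)$ such that $\bm{X}|_b$ is $d$-connected to $\bm{Y}|_b$ given $\bm{Z}|_b$ in $GG_{\mathcal{M}_s}$.

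To carry this out I would first invoke Lemma~\ref{lm:noempty} to guarantee that every relational variable, including every intersection variable, appearing on $\pi$ has a realizable non-empty terminal set. This is exactly the gap that the \citet{lee-uai15} counterexample exploited; Assumption~\ref{asm:counter} together with Lemma~\ref{lm:noempty} rules out the pathological one-to-one configuration that allowed an AGG edge through an intersection variable with no realization in any ground graph. Next, I would walk along $\pi$ from the $\bm{X}$-endpoint to the $\bm{Y}$-endpoint and greedily instantiate each relational variable on the walk to a single concrete terminal element, reusing instances only when two variables on $\pi$ are forced to share terminals by an intersection node, and introducing fresh entities elsewhere. This produces a candidate skeleton $s$ and a concrete path in $GG_{\mathcal{M}_s}$ whose node sequence mirrors the AGG walk $\pi$ edge-for-edge.

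It then remains to verify that the constructed path is itself $d$-connecting in the ground graph: colliders on $\pi$ correspond to colliders in $GG_{\mathcal{M}_s}$ and non-colliders to non-colliders, by the way AGG edges are extended from the relational dependencies $\mathcal{D}$, so it suffices to check the activation condition. Every non-collider on $\pi$ is outside $\bm{\overline{Z}}$, and I would use the freedom from Assumption~\ref{asm:counter} (each entity having at least two neighbors) to pad the skeleton with additional entities and relationships so that no conditioning variable in $\bm{Z}|_b$ accidentally picks up the concrete realization of a non-collider. Each collider on $\pi$ is an ancestor of $\bm{\overline{Z}}$ in the AGG, and I would lift a witnessing directed AGG-path to the ground graph by the same instantiation procedure, again using Lemma~\ref{lm:noempty} to guarantee non-empty descendants in the conditioning set.

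The main obstacle, and the reason this proof is not purely mechanical, is the simultaneous instantiation step: a single skeleton $s$ must realize the main connecting walk, the directed paths certifying that every collider lies in $AN_{GG_{\mathcal{M}_s}}(\bm{Z}|_b)$, and the absence of inadvertent realizations placing non-colliders into $\bm{Z}|_b$. Controlling all these constraints jointly requires that the local construction at each step can always be extended without conflict. Assumption~\ref{asm:counter} plus Lemma~\ref{lm:noempty} are exactly what make this extension argument go through, since they permit the introduction of additional fresh entities at every multiplicity point without violating the cardinality function $card$, and the completeness conclusion then follows by contraposition.
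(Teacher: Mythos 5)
Your proposal is correct and takes essentially the same route as the paper: the paper's entire proof is to observe that Lemma~\ref{lm:noempty} (under Assumption~\ref{asm:counter}) removes the unrealizable-intersection-variable pathology behind the \citet{lee-uai15} counterexample, after which the original soundness and completeness argument of \citet{maier-arxiv13} applies directly. Your sketch merely unpacks that deferred Maier-style construction---citing soundness and realizing a $d$-connecting walk in \aggms{} within a concrete skeleton for the contrapositive of completeness---with Lemma~\ref{lm:noempty} invoked at exactly the same point.
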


\begin{proof}
Following lemma \ref{lm:noempty}, the original proof of soundness and completeness of relational $d$-separation by \citet{maier-arxiv13} directly applies which proves proposition \ref{prop:sc}.
\end{proof}

The correctness of our approach to relational $\sigma$-separation relies on several facts which are similar to the case for AGG:
(1) $\sigma$-separation is valid for directed cyclic graphs; (2) ground graphs are directed cyclic graphs; and (3) \sagg{}s are directed cyclic graphs that represent exactly the edges that could appear in all possible ground graphs. Note that we no longer need assumption \ref{asm:acyclic}, but assumptions \ref{asm:conf} and \ref{asm:counter} are adopted from relational $d$-separation. Using the previous definitions and lemmas, the following additional assumptions and sequence of results proves the correctness of our approach to identifying independence in cyclic relational models.

\begin{assumption}
\label{asm:sfaith}
The given cyclic relational model structure is $\sigma$-\textit{faithful}.

\end{assumption}

\setcounter{theorem}{0}
\begin{theorem}
The rules of $\sigma$-separation are sound and complete for cyclic directed graphs.
\end{theorem}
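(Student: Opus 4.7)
The plan is to reduce the theorem to two well-established results about $\sigma$-separation on directed cyclic graphs, one supplying soundness and one supplying completeness, rather than proving either from scratch. Soundness will follow from the global directed Markov property for $\sigma$-separation, and completeness will follow from the $\sigma$-faithfulness assumption the paper has just introduced (Assumption \ref{asm:sfaith}).

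First I would fix notation by letting $\mathcal{G}=\langle \mathcal{V},\mathcal{E}\rangle$ be an arbitrary DCG and letting $P$ be a distribution over $\mathcal{V}$ induced by a structural causal model with graph $\mathcal{G}$. For soundness I would invoke the global directed Markov property for $\sigma$-separation established by \citet{forre-arxiv17}: for any disjoint sets $A,B,C\subseteq \mathcal{V}$, whenever $A \overset{\sigma}{\underset{\mathcal{G}}{\indep}} B \mid C$, the conditional independence $A \indep B \mid C$ holds in $P$. This is exactly what is needed to establish soundness, namely that every $\sigma$-separation statement read off $\mathcal{G}$ is reflected in every SCM-induced distribution compatible with $\mathcal{G}$. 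Since the earlier subsection stated this Markov property directly, the work here is essentially a reference plus a one-line instantiation to the pair $(\mathcal{G},P)$ at hand.

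For completeness I would argue contrapositively. Suppose $A$ and $B$ are not $\sigma$-separated by $C$ in $\mathcal{G}$; I need to produce a distribution compatible with $\mathcal{G}$ in which $A \not\indep B \mid C$. Under Assumption \ref{asm:sfaith}, the model is $\sigma$-faithful, meaning the induced distribution reflects every $\sigma$-connection as a genuine statistical dependence. Therefore any $\sigma$-connecting walk between $A$ and $B$ given $C$ witnesses a violation of conditional independence in the faithful distribution, yielding $A \not\indep B \mid C$ as required. Combining this with soundness gives the biconditional
\[
A \overset{\sigma}{\underset{\mathcal{G}}{\indep}} B \mid C \quad \Longleftrightarrow \quad A \indep B \mid C \text{ in } P,
\]
which is exactly the soundness and completeness statement.

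The only genuine obstacle is that ``completeness'' is ambiguous in the cyclic setting: it can mean (a) completeness with respect to the graphical class (there exists some faithful distribution realizing each non-separation), or (b) completeness with respect to a fixed distribution (under faithfulness). I would state explicitly in the proof which reading is adopted, and note that reading (a) is the one supported by the Forré--Mooij machinery together with Assumption \ref{asm:sfaith}; for reading (b) one additionally needs that the chosen solution concept for the SCM (e.g., equilibrium or simple SCM) admits a faithful realization, which I would cite rather than reprove. Everything else is bookkeeping, so the proof itself should be short once these two pillars are in place.
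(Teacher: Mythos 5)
Your proposal is correct and follows essentially the same route as the paper: soundness via the global directed Markov property for $\sigma$-separation of \citet{forre-arxiv17} (the paper notes DCGs are a subclass of HEDGes), and completeness discharged directly by the $\sigma$-faithfulness assumption. Your explicit flagging of the two readings of ``completeness'' is a careful refinement the paper omits, but it does not change the argument.
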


\begin{proof}
\citet{forre-arxiv17} show that for quite general structural equation models HEDGes\footnote{The definition of HEDG is provided in the Appendix} always follow a directed global Markov property based on $\sigma$-separation which completes the proof for soundness since directed cyclic graphs are subsets of HEDGes. The completeness claim is already covered by Assumption \ref{asm:sfaith}.
\end{proof}

\begin{theorem}
For every cyclic RCM $\mathcal{M} = (\mathcal{S}, \mathcal{D})$ and skeleton $s \in \sum_\mathcal{S}$ such that relational variables involved in $\mathcal{D}$ are non-empty, the ground graph \ggms{} is a cyclic directed graph.
\end{theorem}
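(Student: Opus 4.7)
The plan is to establish two things separately: that $GG_{\mathcal{M}_s}$ is a directed graph, and that it actually contains a directed cycle (thereby justifying the shift from DAG to DCG and the need for $\sigma$-separation).

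For the directedness, I would appeal straight to the construction. Every edge of $GG_{\mathcal{M}_s}$ arises by instantiating some relational dependency $[I_j,\ldots,I_k].X' \to [I_j].X$, which is by definition a directed probabilistic dependence. Instantiating at a particular $b \in s(B)$ therefore yields a directed edge from one concrete attribute instance to another. Hence $\mathcal{E}(GG_{\mathcal{M}_s}) \subseteq \{(u,v) : u,v \in \mathcal{V}, u\neq v\}$ is a set of ordered pairs, matching the definition of a DCG.

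For the existence of at least one cycle, I would invoke Definition~\ref{dfn:crcm}: since $\mathcal{M}$ is cyclic, there is a sequence of dependencies $D_1, D_2, \ldots, D_n, D_1$ in $\mathcal{D}$ whose cause/effect attribute classes chain into a directed loop. Write each $D_i$ as $[I_{j_i},\ldots,I_{k_i}].X'_i \to [I_{j_i}].X_i$, where the effect of $D_i$ coincides (as an attribute class on an entity/relationship class) with the cause of $D_{i+1}$ traversed from its own perspective. I would then argue that this template-level cycle lifts to an instance-level cycle by picking a witness entity $b \in s(B)$, walking the relational paths of $D_1, \ldots, D_n$ through the skeleton, and using the non-emptiness hypothesis on the relational variables involved in $\mathcal{D}$ to guarantee that each step of the walk selects at least one concrete entity instance. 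Because the dependencies close up at the template level and the perspectives line up by construction of a cyclic RCM, the chain of instantiated directed edges returns to the starting attribute instance, producing a directed cycle in $GG_{\mathcal{M}_s}$.

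The main obstacle I anticipate is the "closing up" step: one must argue that a single coherent choice of instances along the relational paths can be made so that the walk actually returns to the same ground-level attribute node, rather than merely to some other instance of the same attribute class. I would handle this by fixing a single perspective $b$ at the outset (the same $B$ from which all dependencies in the cycle are expressed, after translating perspectives as needed via the symmetry of relational dependencies), and then using the non-emptiness assumption together with the connectivity of the skeleton implied by the existence of the relevant relational paths to extract at least one valid sequence of terminal-set choices closing the loop at $b.X_1$. Everything else (that the edges are directed, that the loop alternates with arrows all pointing forward) follows from the template-level cycle and the construction of $GG_{\mathcal{M}_s}$ from $\mathcal{M}$ and $s$.
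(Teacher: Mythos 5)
Your decomposition (directedness plus existence of a directed cycle) is reasonable, and the directedness half is correct and essentially immediate from the construction of \ggms{}. The genuine gap is exactly where you flagged it: the ``closing up'' step, and the patch you propose does not repair it. Non-emptiness of the relational variables in $\mathcal{D}$ guarantees that each step of your instance-level walk has \emph{some} continuation in the skeleton, but nothing forces any continuation to return to the starting attribute instance $b.X_1$ rather than to a different instance of the same attribute class. Concretely, take a template cycle of length three over entity classes $A,B,C$: $[A,R_1,B].Y \rightarrow [A].X$, $[B,R_2,C].Z \rightarrow [B].Y$, $[C,R_3,A].X \rightarrow [C].Z$, and a skeleton that is a chain $a_1\,R_1\,b_1\,R_2\,c_1\,R_3\,a_2\,R_1\,b_2$ with no closed triangle of relationships. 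Every relational variable involved in $\mathcal{D}$ has a non-empty instantiation, yet the ground graph is the directed path $b_2.Y \rightarrow a_2.X \rightarrow c_1.Z \rightarrow b_1.Y \rightarrow a_1.X$, which is acyclic. So ``connectivity of the skeleton implied by the existence of the relevant relational paths'' cannot extract a sequence of terminal-set choices closing the loop at $b.X_1$. The correct general argument requires the stronger reading of the hypothesis (terminal sets non-empty for \emph{every} relevant instance) together with finiteness of the skeleton, and then it is a pigeonhole argument, not a witness-at-$b$ argument: every ground node of the attribute classes on the template cycle has an in-neighbor, so iterating the parent map within a finite node set must revisit some node, yielding a directed cycle through \emph{some} instance --- in general not through any pre-chosen $b$.

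For comparison, the paper's own proof sidesteps this difficulty entirely by considering only the special case of a cycle formed by a pair of mutually inverse dependencies $P.j \rightarrow Q.k$ and $Q.k \rightarrow P.j$: there, a single skeleton connection instantiates both directions between the same pair of ground nodes $a$ and $b$, so the $2$-cycle $a \rightarrow b \rightarrow a$ appears immediately and no closure argument is needed. Your proposal is more ambitious in that it attempts the arbitrary-length cycles that Definition~\ref{dfn:crcm} actually permits (and which the paper's proof silently skips), but as written the closure at $b.X_1$ is asserted rather than proved, and the chain example above shows it is false without strengthening the non-emptiness hypothesis. Either restrict to the inverse-pair case as the paper does, or replace the witness walk with the finiteness-plus-pigeonhole argument sketched above.
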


\begin{proof}

Let's assume for contradiction that there exists an acyclic ground graph $g$ which is a realization of a given cyclic RCM $\mathcal{M} = (\mathcal{S}, \mathcal{D})$ and skeleton $s \in \sum_\mathcal{S}$. According to the definition of ground graphs, the edges of ground graphs are directly constructed based on the relational dependencies of the model. Definition \ref{dfn:crcm} states that a cyclic RCM consists of one or more cycles formed by the relational dependencies. Assume a cycle in cyclic RCM is formed by the pair of relational dependencies as follows: a) $P.j \rightarrow Q.k$, and b) $Q.k \rightarrow P.j$ where $P$ and $Q$ are relational paths from some perspective $b$ and $i,j$ refers to two attribute classes. By construction of $g$ there must be two nodes $a, b$ in $g$ corresponding to $P.j$ and $Q.k$ respectively. Moreover, the definition of $g$ requires two edges $a \rightarrow b$ and $b \rightarrow a$ to be present in the ground graph. But such edges constructs a cycle which is contradictory to the initial claim. Thus, the ground graph $g$ must be cyclic.
\end{proof}

\begin{theorem}
\label{agg:sc}
For every cyclic relational model structure $\mathcal{M}$ and perspective $B \in \bm{\mathcal{E}} \cup \bm{\mathcal{R}}$, the \saggms{} is sound and complete for all ground graphs \ggms{} with skeleton $s \in \sum_\mathcal{S}$.
\end{theorem}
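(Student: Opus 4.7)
The plan is to mirror the soundness-and-completeness argument that Maier et al.\ gave for AGG with respect to relational $d$-separation, replacing $d$-separation everywhere by $\sigma$-separation and appealing to the three theorems already established above. The ingredients are: (i) Theorem 1, which says $\sigma$-separation enjoys the global directed Markov property on arbitrary directed cyclic graphs; (ii) Theorem 2, which guarantees that every ground graph \ggms{} of a cyclic RCM is itself a directed cyclic graph, so $\sigma$-separation is the appropriate criterion on it; and (iii) Lemma~\ref{lm:noempty}, together with Assumption~\ref{asm:counter}, which guarantees that every relational variable and intersection variable in \saggms{} is realized by a non-empty terminal set in some ground graph. The \saggms{} is built by the same \emph{extend} procedure as AGG, so the combinatorial correspondence between walks in \saggms{} and walks among terminal sets in \ggms{} carries over verbatim from Maier et al.; the only new bookkeeping is the strongly connected component structure that distinguishes $\sigma$-blocking from $d$-blocking.

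For soundness, I would proceed by contrapositive: fix any skeleton $s \in \sum_\mathcal{S}$ and any instance $b \in s(B)$, and suppose there is a $\sigma$-connecting walk between $\bm{X}|_b$ and $\bm{Y}|_b$ given $\bm{Z}|_b$ in \ggms{}. Because every edge of the ground graph is an instantiation of some relational dependency in $\mathcal{D}$, the sequence of attribute classes and relational paths traversed by this walk lifts to a walk in \saggms{} whose edges exist by construction (possibly routed through intersection variables when the same ground node is reached by two distinct relational paths). Colliders and non-colliders along the lifted walk match those along the ground walk, and membership of a ground node in a strongly connected component lifts to membership in the corresponding strongly connected component of \saggms{} because cycles in the ground graph are themselves projections of cycles among relational variables in \saggms{}. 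Hence the lifted walk is $\sigma$-connecting, contradicting $\sigma$-separation in \saggms{}.

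For completeness, I would again argue by contrapositive: suppose $\bm{X}$ and $\bm{Y}$ are not $\sigma$-separated by $\bm{Z}$ in \saggms{}, so a $\sigma$-connecting walk $w$ exists. Using Lemma~\ref{lm:noempty} together with Assumption~\ref{asm:counter}, I would construct (or choose) a skeleton $s$ and an instance $b \in s(B)$ in which every relational and intersection variable appearing on $w$ has a non-empty terminal set. The same walk $w$ then projects down to a walk in \ggms{} between terminal sets of $\bm{X}|_b$ and $\bm{Y}|_b$ whose colliders and non-colliders, and whose strongly connected component memberships, match those of $w$. By Assumption~\ref{asm:sfaith} (used only for converting failure of $\sigma$-separation in the ground graph into a genuine dependence in the induced distribution, if needed), the ground walk remains $\sigma$-connecting given $\bm{Z}|_b$, so $\bm{X}|_b$ and $\bm{Y}|_b$ are not $\sigma$-separated by $\bm{Z}|_b$ in \ggms{}, contradicting relational $\sigma$-separation.

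The main obstacle I anticipate is the strongly connected component condition in clause~3 of the definition of $\sigma$-separation. Unlike $d$-separation, blocking at a conditioned non-collider depends not just on local structure at $v_k$ but on whether $v_k$'s ground-graph neighbors lie in the same SCC as $v_k$. I expect to need a separate lemma showing that two relational variables $U, V$ belong to the same strongly connected component of \saggms{} if and only if, in every realizing ground graph, the terminal items $U|_b$ and $V|_b$ (whenever both non-empty) sit inside the same SCC of \ggms{}. This requires tracing directed cycles through the extend construction and arguing that directed relational cycles are neither created nor destroyed by the lifting, using Lemma~\ref{lm:noempty} to rule out spurious cycles that vanish in every skeleton.
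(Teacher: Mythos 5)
You have proved the wrong statement. In the paper, Theorem~\ref{agg:sc} is the \emph{edge-level} abstraction claim, mirroring Maier et al.'s correctness theorem for AGG: soundness means that every edge $P_k.X \rightarrow P_j.Y$ in \saggms{} is realized as a concrete edge $i_k.X \rightarrow i_j.Y$ (with $i_k \in P_k|_b$, $i_j \in P_j|_b$) in some ground graph \ggms{}; completeness means that every edge of every ground graph is covered by the corresponding relational-variable edges in \saggms{}, possibly routed through intersection variables. The paper's appendix proof accordingly proceeds by case analysis on the three edge types for soundness ($RV\!E$ edges, $IV\!E$ edges out of an intersection variable, $IV\!E$ edges into one) and, for completeness, by a three-way analysis of how a ground edge drawn from a dependency $[I_j,\dots,I_k].X \rightarrow [I_j].Y$ is captured via the \emph{extend} method (directly, via an intersection on the source side, or via an intersection on the target side). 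What you sketched instead is the \emph{separation-level} statement—that $\sigma$-separation on \saggms{} coincides with $\sigma$-separation in all ground graphs—which is the paper's Theorem~\ref{thm:sepsc}, a distinct, later result whose proof explicitly \emph{cites} Theorem~\ref{agg:sc}.

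As a consequence, your plan is circular if offered as a proof of Theorem~\ref{agg:sc}: the steps ``every edge of the ground graph \dots\ lifts to a walk in \saggms{} whose edges exist by construction'' and ``the same walk $w$ projects down to a walk in \ggms{}'' are precisely the completeness and soundness halves of the theorem to be proven, and asserting that the combinatorial correspondence ``carries over verbatim from Maier et al.'' assumes the conclusion. The cyclic setting does require re-verification, since the revised notion of relational dependency now permits identical cause and effect attribute classes ($X = X'$), which is exactly what the paper's case analysis checks. Note also that Lemma~\ref{lm:noempty}, Assumption~\ref{asm:counter}, and Assumption~\ref{asm:sfaith} play no role in the paper's proof of this theorem, which is purely graphical. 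Your anticipated lemma on strongly connected component correspondence between \saggms{} and the ground graphs is a genuinely good observation—the paper's own proof of Theorem~\ref{thm:sepsc} is silent on exactly that point—but it belongs to the separation-level theorem, not to this one.
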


The proof follows from the proof of soundness and completeness of AGG~\citep{maier-arxiv13}. The complete proof is provided in the Appendix.

\begin{theorem}
The abstract ground graph \saggms{} is a cyclic directed graph if and only if the underlying relational model structure is cyclic.
\end{theorem}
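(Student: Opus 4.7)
The plan is to prove both directions of the biconditional by exploiting the fact that \saggms{} is constructed edge-by-edge via the same \emph{extend} operation used for \aggms{}, so every edge in \saggms{} can be traced back to a specific relational dependency in $\mathcal{D}$.

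For the forward direction, I would assume $\mathcal{M}$ is cyclic, so by Definition~\ref{dfn:crcm} the dependencies $\mathcal{D}$ contain a cycle $d_1, d_2, \ldots, d_n, d_1$ in which the effect attribute of $d_i$ coincides with the cause attribute of $d_{i+1 \bmod n}$. Fixing a perspective $B$ and a hop threshold $h$ large enough to accommodate the cumulative relational path length, I would show that each $d_i$ induces at least one edge in \saggms{} via extend. Chaining these edges in sequence produces a closed directed walk in \saggms{}; the simplest case is a pairwise feedback loop $P.X' \rightarrow Q.X$ and $Q.X \rightarrow P.X'$, whose extensions produce a bidirectional pair of edges between two relational variables closed through the induced intersection node they share, exactly as illustrated in Figure~\ref{sfig:crcm_agg}.

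For the backward direction, I would argue the contrapositive: if $\mathcal{M}$ is acyclic, then \saggms{} is acyclic. Since the construction of \saggms{} is identical to that of \aggms{} (Definition~\ref{dfn:saggm}), and Maier's AGG is known to be a DAG when the underlying RCM is acyclic, it suffices to verify that intersection variables cannot introduce new cycles. Because each edge incident to an intersection variable is inherited from a constituent relational variable, every edge $U \rightarrow V$ in \saggms{} projects to some dependency in $\mathcal{D}$ with the same orientation, so any cycle in \saggms{} would yield a cycle among the dependencies in $\mathcal{D}$, contradicting acyclicity.

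The main obstacle will be the forward direction, and specifically showing that the chain of extensions along the dependency cycle actually closes back on itself inside \saggms{}. After traversing the cycle of dependencies, the final relational variable typically shares the attribute class and terminal entity class of the starting node but has a strictly longer relational path, so the closure is realized only through their induced intersection variable. Making this rigorous requires careful bookkeeping of the paths produced by repeated extensions and verifying that the induced intersection is realized in some skeleton, which follows from Lemma~\ref{lm:noempty} under Assumption~\ref{asm:counter}.
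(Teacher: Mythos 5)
Your overall skeleton matches the paper's proof---both directions rest on the edge--dependency correspondence of Definition~\ref{dfn:saggm}---but your forward direction is genuinely more careful, and the comparison is worth spelling out. The paper argues by contradiction from a single pairwise inverse pair of dependencies and simply asserts that the two corresponding edges in \saggms{} ``form a cycle''; it never verifies that the edges actually close on the same pair of nodes, and it never addresses cycles of length greater than two, which Definition~\ref{dfn:crcm} explicitly allows. In the pairwise case closure is in fact immediate from the inversion property of \emph{extend}: if $[I_j,\dots,I_k].X \rightarrow [I_j].Y$ and its inverse are both in $\mathcal{D}$, then \emph{extend} yields the edge $[I_j,\dots,I_k].X \rightarrow [I_j].Y$ and, by full-overlap pruning, the reverse edge $[I_j].Y \rightarrow [I_j,\dots,I_k].X$, so the two relational variables themselves form a two-cycle---no intersection node is needed there, and your phrasing slightly misstates this (the direct bidirected pairs in Figure~\ref{sfig:crcm_agg} are exactly these direct closures). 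Where your proposal adds real content is the longer-cycle case, which the paper's proof silently skips: chaining \emph{extend} around a dependency cycle of length greater than two returns to the starting attribute class on a strictly longer relational path, so the walk does not close among relational variables alone, and the cycle in \saggms{} is realized only through an intersection variable that inherits an outgoing edge from the short constituent path and an incoming edge from the long one---together with your two caveats that $h$ must exceed the cumulative path length and that the intersection must be realizable in some skeleton (Lemma~\ref{lm:noempty} under Assumption~\ref{asm:counter}). Neither caveat appears in the paper, and the hop-threshold one is not cosmetic: without a bound relating $h$ to the lengths of the dependency paths on the cycle, the forward implication of the theorem as stated can fail for small $h$, so your quantification is a correction rather than mere bookkeeping. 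Your backward direction coincides with the paper's projection argument and is sound once you note, as you do, that every edge incident to an intersection variable is inherited from a constituent relational variable with the same attribute class and orientation, so a cycle in \saggms{} projects to a closed directed walk among attribute classes and hence contains a dependency cycle.
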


\begin{proof}

Let $\mathcal{M}$ be an arbitrary (possibly) cyclic relational model structure, and let $B \in \bm{\mathcal{E}} \cup \bm{\mathcal{R}}$ be an arbitrary perspective. It is clear by Definition \ref{dfn:saggm} that every edge in the abstract ground graph \saggms{} is directed by construction. Assume for contradiction that no cycles exists in \saggms{} even if the relational dependencies form one or more cycles. Now assume the following two dependencies are part of the given relational model $\mathcal{M}$: 1. $[I_j, . . . , I_k].X \rightarrow [I_j].Y \in D$, 2. $[I_j].Y \rightarrow [I_j, . . . , I_k].X \in D$ where $I_j, ... , I_k \in \bm{\mathcal{E}} \cup \bm{\mathcal{R}}$. By Definition \ref{dfn:saggm}, all edges inserted in \saggms{} are drawn from some dependency in $\mathcal{M}$ and edges in \saggms{} are constructed for all the dependencies in $D$. As a result there must be corresponding edges in the \saggms{} for both dependencies that form a cycle, which contradicts the assumption. 

Now, assume that a \saggms{} is acyclic even if the underlying RCM is cyclic. Using the same argument as above we can say that the edges in the \saggms{} constructed based on the dependencies in $\mathcal{D}$. If a cycle exists in the \saggms{} it directly implies the existence of a cycle in the RCM which leads to a contradiction. Thus the proof completes from both directions.
\end{proof}

\begin{theorem}
\label{thm:sepsc}
Relational $\sigma$-separation is sound and complete for \sagg{}. Let $\mathcal{M}$ be a (possibly) cyclic relational model structure, and let $\bm{X}$, $\bm{Y}$, and $\bm{Z}$ be three distinct sets of relational variables defined over relational schema $\mathcal{S}$. Then,
$\bm{\overline{X}}$ and $\bm{\overline{Y}}$ are $\sigma$-separated by $\bm{\overline{Z}}$ on the abstract ground
graph \saggms{} if and only if for all skeletons $s \in \sum_\mathcal{S}$ and for all perspectives $b \in s(B)$, $\bm{X}|_b$ and $\bm{Y}|_b$ are $\sigma$-separated by $\bm{Z}|_b$ in ground graph \ggms{}. 
\end{theorem}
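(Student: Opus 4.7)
The plan is to prove both directions of the biconditional by contrapositive, mirroring the argument used by \citet{maier-arxiv13} for relational $d$-separation (our Proposition \ref{prop:sc}) while accounting for the additional strongly-connected-component (SC) clause that distinguishes $\sigma$-separation from $d$-separation. The bridge between the abstract and ground levels is Theorem \ref{agg:sc}, which guarantees that the edges of \saggm{} are exactly those that can appear across all ground graphs.

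For soundness, suppose by contrapositive that there exist a skeleton $s \in \sum_\mathcal{S}$ and instance $b \in s(B)$ admitting a $\sigma$-connecting walk $W$ between $\bm{X}|_b$ and $\bm{Y}|_b$ given $\bm{Z}|_b$ in $GG_{\mathcal{M}_s}$. I would lift $W$ edge-by-edge to a walk $\overline{W}$ in \saggm{}: by Definition \ref{dfn:saggm} every ground-graph edge is the instantiation of some relational dependency in $\mathcal{D}$ and therefore has a corresponding edge in \saggm{}, possibly passing through an intersection node that by the definition of $\bm{\overline{Z}}$ is included in the conditioning set whenever its constituents are. Collider/non-collider status and membership in the conditioning set are preserved under this lift, which handles the first two clauses of $\sigma$-blocking. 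The new ingredient is the SC clause: if a non-collider $v_k$ on $W$ has a neighbor in $SC_{GG_{\mathcal{M}_s}}(v_k)$, then the directed cycles witnessing that SC-membership at the ground level lift by the same procedure to directed cycles in \saggm{}, so the corresponding lifted triple likewise shares an SC.

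For completeness, suppose that $\bm{\overline{X}}$ and $\bm{\overline{Y}}$ are $\sigma$-connected given $\bm{\overline{Z}}$ via some walk $\overline{W}$ in \saggm{}. I would construct a skeleton $s$ and perspective $b$ realizing a $\sigma$-connecting walk in $GG_{\mathcal{M}_s}$. Lemma \ref{lm:noempty} together with Assumption \ref{asm:counter} guarantees that every relational variable and every intersection variable appearing on $\overline{W}$ has non-empty terminal sets in some ground graph, so the nodes of $\overline{W}$ can be instantiated one by one as in Maier et al.'s $d$-separation argument. For each non-collider $v_k$ on $\overline{W}$ whose neighbor lies in the same SC as $v_k$ in \saggm{}, the construction is augmented so that the witnessing directed cycles through $v_k$ are likewise instantiated in the skeleton, ensuring that the corresponding ground-graph triple also sits in a single SC and the $\sigma$-unblocking at $v_k$ survives the realization.

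The main obstacle is precisely this joint realizability: in the acyclic $d$-separation setting only the connecting walk needed to be realized in a single skeleton, whereas here the skeleton must simultaneously carry the walk and each cycle witnessing SC-membership at every non-collider. Assumption \ref{asm:counter} supplies the needed slack---because every entity has degree at least two, additional entity instances can always be attached to realize the required cycles and intersection variables without collapsing the walk, by the same combinatorial reasoning that underlies Lemma \ref{lm:noempty}. A secondary subtlety is intersection variables lying on a witnessing cycle, which forces the construction to realize two relational paths sharing a terminal instance; this is again handled by the minimum-degree assumption.
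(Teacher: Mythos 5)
Your proposal takes essentially the same route as the paper's proof: both directions are argued by contradiction, lifting a $\sigma$-connecting path from a ground graph into \saggms{} for soundness and realizing a $\sigma$-connecting path from \saggms{} in some ground graph for completeness, with Theorem \ref{agg:sc} supplying the edge correspondence in each direction. If anything you are more careful than the paper, whose proof never discusses how the strongly-connected-component clause of $\sigma$-blocking transfers between the abstract and ground levels, nor the joint realizability of an entire connecting walk (and its SC-witnessing cycles) within a single skeleton---subtleties your sketch explicitly raises and attacks via Lemma \ref{lm:noempty} and Assumption \ref{asm:counter}.
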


\begin{proof}

We must show that $\sigma$-separation on an abstract ground graph implies $\sigma$-separation on all ground graphs it represents (soundness) and that $\sigma$-separation facts that hold across all ground graphs are also entailed by $\sigma$-separation on the abstract ground graph (completeness). The proof follows from the proof of soundness and completeness of AGG~\citep{maier-arxiv13}.

\textbf{Soundness:} 

Assume that $\bm{\bar{X}}$ and $\bm{\bar{Y}}$ are $\sigma$-separated by $\bm{\bar{Z}}$ on \saggms{}. Assume
for contradiction that there exists an item instance $b$ such that $\bm{X}|_b$ and $\bm{Y}|_b$ are not $\sigma$-separated by $\bm{Z}|_b$ in the ground graph $GG_{\mathcal{M}_s}$ for some arbitrary skeleton $s$. Then, there must exist a $\sigma$-connecting path $p$ from some $x \in \bm{\bar{X}}|_b$ to some $y \in \bm{\bar{Y}}|_b$ given all $z \in \bm{\bar{Z}}|_b$. By Theorem \ref{agg:sc}, \saggms{} is complete, so all edges in $GG_{\mathcal{M}_s}$ are captured by edges in \saggms{}. So, path $p$ must be represented from some node in $\{N_x |\, x \in N_x|_b\}$ to some node in $\{N_y |\, y \in N_y|_b\}$, where $N_x$, $N_y$ are nodes in \saggms{}. If $p$ is $\sigma$-connecting in \ggms{}, then it is $\sigma$-connecting in \saggms{}, implying that $\bm{\bar{X}}$ and $\bm{\bar{Y}}$ are not $\sigma$-separated by $\bm{\bar{Z}}$. So, $\bm{X}|_b$ and $\bm{Y}|_b$ must be $\sigma$-separated by $\bm{Z}|_b$.\\

\textbf{Completeness:} 

Assume that $\bm{X}|_b$ and $\bm{Y}|_b$ are $\sigma$-separated by $\bm{Z}|_b$ in the ground graph \ggms{} for all skeletons $s$ for all $b \in s(B)$. Assume for contradiction that $\bm{\bar{X}}$ and $\bm{\bar{Y}}$ are not $\sigma$-separated by $\bm{\bar{Z}}$ on \saggms{}. Then, there must exist a $\sigma$-connecting path $p$ for some relational variable $X \in \bm{\bar{X}}$ to some $Y \in \bm{\bar{Y}}$ given all $Z \in \bm{\bar{Z}}$. By Theorem \ref{agg:sc}, \saggms{} is sound, so every edge in \saggms{} must correspond to some pair of variables in some ground graph. So, if $p$ is $\sigma$-connecting in \saggms{}, then there must exist some skeleton $s$ such that $p$ is $\sigma$-connecting in \ggms{} for some $b \in s(B)$, implying that $\sigma$-separation does not hold for that ground graph. So, $\bm{\bar{X}}$ and $\bm{\bar{Y}}$ must be $\sigma$-separated by $\bm{\bar{Z}}$ on \saggms{}.
\end{proof}

\citet{maier-arxiv13} show that relational $d$-separation is equivalent to the Markov condition on acyclic relational models. However, it doesn't hold for cyclic relational model. Here, we show how relational $\sigma$-separation is equivalent to the Markov condition on cyclic relational models.

\setcounter{theorem}{7}
\begin{definition}[Relational $\sigma$-separation Markov Condition]
Let $X, Y, Z$ be relational variables for perspective $B \in \bm{\mathcal{E}} \cup \bm{\mathcal{R}}$ defined over relational schema $\mathcal{S}$. For any solution $(\bm{\mathcal{X}}, \bm{\epsilon})$ of a relational model $\mathcal{M}$ which follows a simple SCM,

$$X \overset{{\sigma}}{\underset{\mathcal{M}}{\indep}} Y | Z \implies \bm{\mathcal{X}}_{X} \underset{\mathbb{P}_\mathcal{M}(\bm{\mathcal{X}})}{\indep} \bm{\mathcal{X}}_{Y} | \bm{\mathcal{X}}_{Z}, \;\; \text{if and only if}$$

$$x \overset{{\sigma}}{\underset{\ggm{}}{\indep}} y | z \implies \bm{\mathcal{X}'}_{x} \underset{\mathbb{P}_\ggm{}(\bm{\mathcal{X}'})}{\indep} \bm{\mathcal{X}'}_{y} | \bm{\mathcal{X}'}_{z}, \;\; \text{for} \: \forall x \in X|_b,\: \forall y \in Y|_b,\: \forall z \in Z|_b$$

\noindent
in ground graph \ggms{} for all skeletons $s \in \sum_\mathcal{S}$ and for all $b \in s(B)$ where $(\bm{\mathcal{X}'}, \bm{\epsilon'})$ refers to the solution of the SCM corresponding to the ground graphs.
\end{definition}

In other words, $\sigma$-separation of two relational variables $\bm{X}$ and $\bm{Y}$ given a third relational variable $\bm{Z}$ would imply $\bm{X}$ and $\bm{Y}$ are conditionally independent given $\bm{Z}$ if and only if, for all instances of $\bm{X}, \bm{Y}, \bm{Z}$ in all possible ground graphs, the same condition holds. Since ground graphs of cyclic RCM are directed cyclic graphs and $\sigma$-separation on \saggms{} is sound and complete (by Theorem \ref{thm:sepsc}), we can conclude that relational $\sigma$-separation is equivalent to the relational Markov property.

\section{Conclusion}

Cycles or feedback loops are common elements of many real-world system. Unfortunately, it is hardly studied in the field of causal inference primarily because it breaks the nice properties of directed acyclic graphs. As a result, cycles and feedback loops are mostly avoided in the domain of relational causal model. In this study, we take a step forward to bridge this gap by developing an abstract representation and a criterion to reason about statistical relationships in relational models with or without cycles under a general framework. We show that the new criterion called $\sigma$-separation can consistently capture the statistical independence relationships of all possible instantiations of a relational causal model. We believe that this work will open the door for further development including but not limited to  causal structure learning of relational models with cycles.
\section{Acknowledgment}

This material is based on research sponsored in part by the Defense Advanced Research Projects Agency (DAPRA) under contract number HR001121C0168 and the National Science Foundation under grant No. 2047899. The views and conclusions contained herein are those of the authors and should not be interpreted as necessarily representing the official policies, either expressed or implied, of DARPA or the U.S. Government. The U.S. Government is authorized to reproduce and distribute reprints for governmental purposes notwithstanding any copyright annotation therein.

\bibliography{references}

\newpage
\newpage
\clearpage

\appendix

\section{Preliminaries}

\subsection{Directed Graphs with Hyperedges (HEDGes)}

A directed graph with hyperedges or hyperedged directed graph (HEDG) is a tuple $G = (\mathcal{V}, \mathcal{E}, \mathcal{H})$, where $(\mathcal{V}, \mathcal{E})$ is a directed graph (with or without cycles) and $\mathcal{H}$ a simplicial complex over the set of vertices $\mathcal{V}$ of $\mathcal{G}$. A simplicial complex $\mathcal{H}$ over $\mathcal{V}$ is a set of subsets of $\mathcal{V}$ such that: 1) all single element sets $\{v\}$ are in $\mathcal{H}$ for $v \in \mathcal{V}$, and 2) if $F \in \mathcal{H}$ then also all subsets $F' \subseteq F$ are elements of $\mathcal{V}$.

The general directed global Markov property (gdGMP) for the HEDGes is stated as follows:

\begin{definition}[gdGMP]
For all subsets $X, Y, Z \subseteq V$ we have the implication: \\
\[
X \overset{\sigma}{\underset{G}{\indep}} Y | Z \implies X \underset{P_v}{\indep} Y | Z
\]
\end{definition}

\subsection{Counterexample by \citet{lee-uai15}}

The following counterexample shows that AGG is not complete for relational $d$-separation. \\

\noindent
\textbf{Example.} Let $\mathcal{S} = \langle \mathcal{E}, \mathcal{R}, \mathcal{A}, card \rangle$ be a relational schema such that: $\mathcal{E} = \{E_i\}^5_{i=1}$; $\mathcal{R} = \{R_j\}^3_{j=1}$ with $R_1 = \langle E_1, E_2, E_4 \rangle$, $R_2 = \langle E_2, E_3 \rangle$, $R_3 = \langle E_3, E_4, E_5 \rangle$; $\mathcal{A} = \{ E_2 : \{Y\}, E_3 : \{X\}, E_5: \{Z\} \}$; and $\forall_{R \in \mathcal{R}} \forall_{E \in \mathcal{E}} \, card (R, E) = one$. Let $\mathcal{M} = \langle \mathcal{S}, \bm{\mathcal{D}} \rangle $ be a relational model with 

\[
\bm{\mathcal{D}} = \{ D_1.X \rightarrow [I_Y].Y, D_2.Z \rightarrow [I_Y].Y \}
\]

\noindent
such that $D_1 = [E_2, R_2, E_3, R_3, E_4, R_1, E_2, R_2, E_3]$ and $D_2 = [E_2, R_2, E_3, R_3, E_5]$. Let $P.X, Q.Y, S.Z, S'.Z$ be four relational variables of the same perspective $B = E_1$ where their relational paths are distinct where

\begin{table}[H]
\centering
\begin{tabular}{ll}
$\bullet\: P = [E_1, R_1, E_2, R_2, E_3]$ & $\bullet\: Q = [E_1, R_1, E_4, R_3, E_3, R_2, E_2]$  \\
$\bullet\: S = [E_1, R_1, E_4, R_3, E_5]$ & $\bullet\: S' = [E_1, R_1, E_2, R_2, E_3, R_3, E_5]$ \\ 
\end{tabular}
\end{table}

\begin{figure}[H]
    \centering
    
    \includegraphics[width=0.42\textwidth]{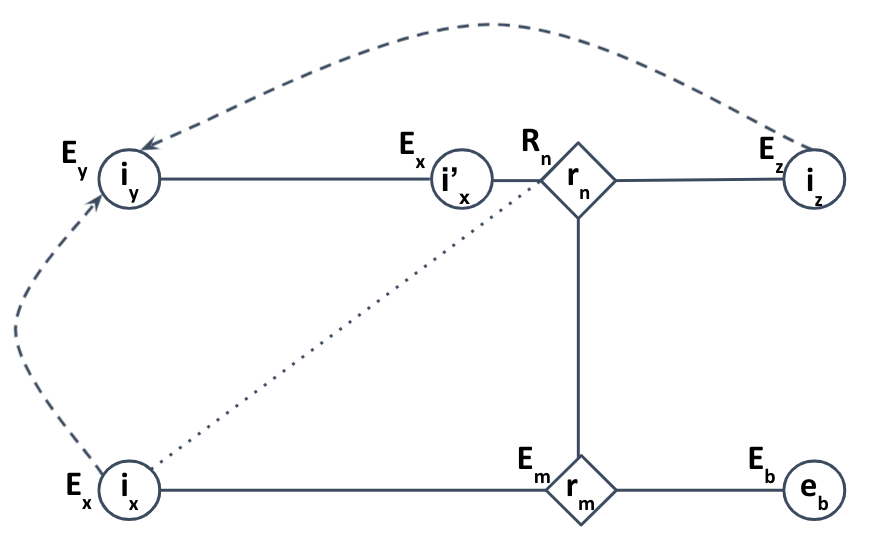}
    \caption{Construction of the counterexample by \cite{lee-uai15}. The notations inside the circles/rhombus refer to instances of the corresponding entity/relationship classes which are mentioned outside the shapes. The dashed lines represent relational dependencies. The dotted line represents a hypothetical connection that can nullify the counterexample under assumption \ref{asm:counter}.}
    
    \label{fig:counter}
\end{figure}

Given the above example, \citet{lee-uai15} make two claims:

\noindent
\textit{Claim 1.} $(\overline{P.X} \not\perp \overline{S'.Z} | \overline{Q.Y})_{{AGG}_\mathcal{M}}$ \\
\textit{Claim 2.} There is no $s \in \sum_\mathcal{S}$ and $b \in s(B)$ such that $(P.X|_b \not\perp S'.Z|_b | Q.Y|_b)_{{GG}_{\mathcal{M}_s}}$

Figure \ref{fig:counter} shows the general pattern discussed in Section \ref{sec:complete} regarding the construction of the counterexample by \citet{lee-uai15}.

\section{Proofs}

The proof for Theorem \ref{agg:sc} is given as follows:

\begin{proof}

Let $\mathcal{M} = (\mathcal{S}, \mathcal{D})$ be an arbitrary cyclic relational model structure and $B \in \bm{\mathcal{E}} \cup \bm{\mathcal{R}}$ an arbitrary perspective.

\textbf{Soundness:} To prove that \saggms{} is sound, we must show that for every edge $P_k.X \rightarrow P_j.Y$ in \saggms{}, there exists a corresponding edge $i_k.X \rightarrow i_j.Y$ in the ground graph \ggms{} for some skeleton $s \in \sum_\mathcal{S}$, where $i_k \in P_k|_b$ and $i_j \in P_j|_b$ for some $b \in s(B)$. There are three subcases, one for each type of edge in an abstract ground graph:

(a) Let $[B, . . . , I_k].X \rightarrow [B, . . . , I_j].Y \in RV\!E$ be an arbitrary edge in \saggms{} between a pair of relational variables. Assume for contradiction that there exists no edge $i_k.X \rightarrow i_j.Y$ in any ground graph:

\begin{align*}
    \forall s \in \Sigma_\mathcal{S}, \forall i_k \in [B, . . . , I_k]|_b, \forall i_j \in [B, . . . , I_j]|_b \\
    (i_k.X \rightarrow i_j.Y \notin GG_{\mathcal{M}_\mathcal{S}})
\end{align*}

By Definition \ref{dfn:saggm} for \saggms{}, if $[B, . . . , I_k].X \rightarrow [B, . . . , I_j].Y \in RV\!E $, then the model must have dependency $[I_j, . . . , I_k].X \rightarrow [I_j].Y \in \mathcal{D}$ such that $[B, . . . , I_k] \in extend([B, . . . , I_j], [I_j, . . . , I_k])$. So, by the definition of ground graphs, there is an edge from every $i_k.X$ to every $i_j.Y$, where $i_k$ is in the terminal set for $i_j$ along $[I_j, . . . , I_k]$. Therefore, there exists a ground graph \ggms{} such that $i_k.X \rightarrow i_j.Y \in$ \ggms{}, which contradicts the assumption.

(b) Let $P_1.X \cap P_2.X \rightarrow [B, . . . , I_j].Y \in IV\!E$ be an arbitrary edge in \saggms{} between an intersection variable and a relational variable, where $P_1=[B, . . . , I_m, . . . , I_k]$ and $P_2 = [B, . . . , I_n, . . . , I_k]$ with $I_m \neq I_n$. By Definition \ref{dfn:saggm}, if the $\sigma$-abstract ground graph has edge $P_1.X \cap P_2.X \rightarrow [B, . . . , I_j].Y \in IV\!E$, then either $P_1.X \rightarrow [B, . . . , I_j].Y \in RV\!E$ or $P_2.X \rightarrow [B, . . . , I_j].Y \in RV\!E$. Then, as shown in case (a), there exists an $i_j \in [B, . . . , I_j]|b$ such that $i_k.X \rightarrow i_j.Y \in $ \ggms{}, which contradicts the assumption.

(c) Let $[B, . . . , I_k].Y \rightarrow P_1.X \cap P_2.X \in IV\!E$ be an arbitrary edge in \saggms{} between an intersection variable and a relational variable, where $P_1=[B, . . . , I_m, . . . , I_j]$ and $P_2 = [B, . . . , I_n, . . . , I_j]$ with $I_m \neq I_n$. The proof follows case (b) to show that there exists a skeleton $s \in \sum_\mathcal{S}$ and $b \in s(B)$ such that for all $i_k \in [B, . . . , I_k]|_b$ there exists an $i_j \in P_1 \cap P_2|_b$ such that $i_k.X \rightarrow i_j.Y \in$ \ggms{}.

\textbf{Completeness:} To prove that the $\sigma$-abstract ground graph \saggms{} is complete, we show that for every edge $i_k.X \rightarrow i_j.Y$ in every ground graph \ggms{} where $s \in \sum_\mathcal{S}$, there is a set of corresponding edges in \saggms{}. Specifically, the edge $i_k.X \rightarrow i_j.Y$ yields two sets of relational variables for some $b \in s(B)$, namely $\bm{P_k.X} = \{P_k.X | i_k \in P_k|_b\}$ and $\bm{P_j.Y} = \{P_j.Y | i_j \in P_j|_b\}$. Note that all relational variables in both $\bm{P_k.X}$ and $\bm{P_j.Y}$ are nodes in \saggms{}, as are all pairwise intersection variables. We show that for all $P_k.X \in \bm{P_k.X}$ and for all $P_j.Y \in \bm{P_j.Y}$ either (a) $P_k.X \rightarrow P_j.Y \in$ \saggms{} (b) $P_k.X \cap P'_k.X \rightarrow P_j.Y \in$ \saggms{} where $P'_k.X \in \bm{P_k.X}$, or (c) $P_k.X \rightarrow P_j.Y \cap P'_j.Y \in$ \saggms{} where $P'_j.Y \in \bm{P_j.Y}$.

Let $s \in \sum_\mathcal{S}$ be an arbitrary skeleton, let $i_k.X \rightarrow i_j.Y \in$ \ggms{} be an arbitrary edge drawn from $[I_j, . . . , I_k].X \rightarrow [I_j].Y \in \mathcal{D}$, and let $P_k.X \in P_k.X, P_j.Y \in P_j.Y$ be an arbitrary pair of relational variables.

\begin{enumerate}[(a),parsep=0pt]
    \item If $P_k \in extend(P_j, [I_j, . . . , I_k])$, then $P_k.X \rightarrow P_j.Y \in$ \saggms{} by Definition \ref{dfn:saggm}.
    
    \item If $P_k \notin extend(P_j, [I_j, . . . , I_k])$, but $\exists P'_k \in extend(P_j, [I_j, . . . , I_k])$ such that $P'_k.X \in P_k.X$, then $P'_k.X \rightarrow P_j.Y \in$ \saggms{}, and $P_k.X \cap P'_k.X \rightarrow P_j.Y \in$ \saggms{} by Definition \ref{dfn:saggm}.
    
    \item If $\forall P\!\in extend(P_j, [I_j, . . . , I_k]) (P.X \notin Pk.X)$, then $\exists P'_j$ such that $i_j \in P'_j|_b$ and $P_k \in extend(P'_j, [I_j, . . . , I_k])$. Therefore, $P'_j.Y \in P_j.Y, P_k.X \rightarrow P'_j.Y \in$ \saggms{}, and $P_k.X \rightarrow P'_j.Y \cap P_j.Y \in$ \saggms{} by Definition \ref{dfn:saggm}.
\end{enumerate}
\end{proof}

\end{document}